%%%%%%%%%%%%%%%%%%%%%%%%%%%%%%%%%%%%%%%%%%%%%%%%%%%%%%%%%%%%%%%%%%%%%%%%
% Standalone arXiv (single-column) version of the user’s document.
% Body text, figures, equations, and references are preserved verbatim.
%%%%%%%%%%%%%%%%%%%%%%%%%%%%%%%%%%%%%%%%%%%%%%%%%%%%%%%%%%%%%%%%%%%%%%%%

\documentclass[11pt]{article}

% Page geometry and typography for arXiv single column
\usepackage[margin=1in]{geometry}
\usepackage{lmodern}
\usepackage[T1]{fontenc}
\usepackage[utf8]{inputenc}
\usepackage{microtype}
\usepackage[square,numbers]{natbib}

% Math, theorem, and algorithm packages
\usepackage{amsmath, amsthm, amssymb}
\newtheorem{theorem}{Theorem}
\newtheorem{definition}{Definition}
\newtheorem{corollary}{Corollary}
\usepackage{algorithm}
\usepackage{algpseudocode}
\usepackage{array}
\usepackage{pgfplots}
\usepackage{float}

% Graphics and plots
\usepackage{tikz}
\usepackage{pgfplots}
\pgfplotsset{compat=1.18}
\pgfplotsset{/pgfplots/error bars/error bar style={very thick, black}}
\usepgfplotslibrary{groupplots}
\usepgfplotslibrary{statistics}
\usetikzlibrary{matrix, patterns, positioning, arrows.meta, fit}

% Hyperlinks and clever references
\usepackage{hyperref}
\usepackage{cleveref}
\crefname{algorithm}{algorithm}{algorithms}
\Crefname{algorithm}{Algorithm}{Algorithms}

% Colors, highlighting, and listings
\usepackage[table, dvipsnames]{xcolor}
\usepackage{soul}
\usepackage{listings}
\definecolor{tabblue}{HTML}{1C77B4}
\definecolor{taborange}{HTML}{FF7F0B}
\definecolor{tabgreen}{HTML}{2AA02B}
\definecolor{tabred}{HTML}{D62729}
\definecolor{tabpurple}{HTML}{9567BC}
\definecolor{codebg}{rgb}{0.96,0.96,0.96}
\definecolor{codeframe}{rgb}{0.85,0.85,0.85}
\newcommand{\tokenhl}[2]{%
  \sethlcolor{OliveGreen!#1!white}%
  \hl{#2}%
}
\lstdefinestyle{customc}{
  backgroundcolor=\color{codebg},
  breaklines=true,
  breakindent=0pt,
  frame=single,
  rulecolor=\color{codeframe},
  resetmargins=true,
  language=Bash,
  showstringspaces=false,
  basicstyle=\footnotesize\ttfamily,
  keywordstyle=\text{},
  commentstyle=\itshape\color{purple},
  identifierstyle=\color{black},
  stringstyle=\color{black},
  numbers=none,
  mathescape=true,
  escapechar=@,
}
\lstset{style=customc}

% Tables
\usepackage{booktabs,tabularx}

\usepackage{fancyhdr}
\pagestyle{fancy}
\fancyhf{}

\fancyhead[L]{Generating Fair Consensus Statements} % short title
\fancyhead[R]{Blair \& Larson}                      % short authors
\fancyfoot[C]{\thepage}

\usepackage{balance}

% Hyperref colors
\hypersetup{colorlinks,
            linkcolor=black,
            citecolor=black,
            urlcolor=OliveGreen,}

% Simple keywords helper
\providecommand{\keywords}[1]{\paragraph*{Keywords:} #1}

% ACM-style BibTeX (as in original); upload sample.bib with your submission
\newcommand{\BibTeX}{\rm B\kern-.05em{\sc i\kern-.025em b}\kern-.08em\TeX}

%%%%%%%%%%%%%%%%%%%%%%%%%%%%%%%%%%%%%%%%%%%%%%%%%%%%%%%%%%%%%%%%%%%%%%%%
%%% Title and authors (single-column article style)
%%%%%%%%%%%%%%%%%%%%%%%%%%%%%%%%%%%%%%%%%%%%%%%%%%%%%%%%%%%%%%%%%%%%%%%%

\title{Generating Fair Consensus Statements \\with Social Choice on Token-Level MDPs}

\author{Carter Blair\\
Harvard University\\
\texttt{carterblair@g.harvard.edu}
\and
Kate Larson\\
University of Waterloo\\
\texttt{kate.larson@uwaterloo.ca}
}

\date{October, 2025} % No date

%%%%%%%%%%%%%%%%%%%%%%%%%%%%%%%%%%%%%%%%%%%%%%%%%%%%%%%%%%%%%%%%%%%%%%%%
\begin{document}
\maketitle

\begin{abstract}
 Current frameworks for aggregating free-form text-based opinions with large language models lack the inherent structure needed to provide meaningful fairness guarantees. To address this, we model the task as a multi-objective, token-level Markov Decision Process (MDP), where each objective corresponds to an agent's preference. Each agent’s token-level reward is induced by its policy (e.g., a personalized language model). Such policies implicitly define optimal Q-functions, thus enabling stepwise reward computation without an explicit value function~\cite{rafailov2024from}. This MDP formulation yields a formal structure that can be analyzed with tools from social choice theory. We first give a stochastic generation policy that is guaranteed to lie in the ex-ante core. It is derived from a distribution over complete statements that maximizes Nash welfare, extending core stability from cooperative game theory and voting to text generation. Second, for a single consensus statement, we target egalitarian welfare and use search within the MDP. Empirically, this search produces statements with improved worst-case agent alignment compared with baselines, including the Habermas Machine \citep{tessler2024ai}. Our code is available at \url{https://github.com/cartgr/Generating-Fair-Consensus-Statements-with-Social-Choice-on-Token-Level-MDPs}.
\end{abstract}

\keywords{Generative Social Choice, Guided Decoding, AI-Augmented Deliberation}

%%%%%%%%%%%%%%%%%%%%%%%%%%%%%%%%%%%%%%%%%%%%%%%%%%%%%%%%%%%%%%%%%%%%%%%%

\section{Introduction}

Social choice theory has traditionally studied how to aggregate preferences over predefined alternatives. Large language models (LLMs) make it possible to aggregate free-form verbal opinions into collective textual outputs, which removes the constraint of a fixed agenda. Importantly, this flexibility can reduce the agenda-setting power of organizers, since participants need not choose from a preset list. However, ensuring provable fairness is difficult: complex training and design choices yield an opaque, irregular LLM structure, which in turn complicates the formalization of specific fairness criteria during generation. For this reason, previous approaches have treated the generation process as a black box, applying various fairness measures post-hoc. For instance, in the Habermas Machine \citep{tessler2024ai}, statements are first generated, and fairness is then pursued through a voting procedure applied to these statements, which were not themselves generated with an explicitly fair mechanism. Similarly, the Generative Social Choice method \citep{fish2024generative} prompts an LLM to maximize a given objective and assumes that the response truly maximizes the objective. These strategies, though aimed at fairness in free-form opinion aggregation, can a cede a new form of agenda control to the LLM. When the model is asked to produce text that satisfies a broad fairness objective, its interpretation and implementation of that objective, the trade-offs it chooses, and the aspects of opinions it elevates remain opaque. This effectively allows the LLM to shape the solution space. As such, these methods risk overlooking biases embedded within the generation process itself, which are known to exist~\citep{feng-etal-2023-pretraining}.

We address this gap by modeling consensus statement generation as a token-level Markov Decision Process (MDP). Each agent $i$'s viewpoint is represented by a policy $\pi_i$, which assigns likelihoods $\pi_i(s,a)$ to token choices given the current prefix $s$. Following~\citet{rafailov2024from}, who show that policies implicitly define optimal Q-functions, our agent policies, $\pi_i$, determine rewards $r_i(s,a)$ (e.g., $r_i^{\log}(s,a) = \beta \log \pi_i(s,a)$) at each generation step. A primary advantage of this reward formulation is that it avoids personalized value functions, which are known to be challenging to train and apply effectively~\citep{guo2025deepseek}. This MDP structure provides a formal basis for integrating fairness principles directly into the construction of the consensus statement.

Within this MDP framework, we develop two approaches that leverage existing notions of fairness from social choice theory, namely the \emph{ex-ante core} and \emph{egalitarian welfare (EW)}, which we argue are compelling notions of fairness in the context of consensus statement generation. First, to achieve an outcome in the ex-ante core, we propose a stochastic generation policy $\Pi^*$. This policy is derived by optimizing a distribution over complete statements to maximize proportional fairness (Nash Welfare), a process known to yield core membership. For consensus generation, the core is a highly desirable stability concept: a lottery in the core ensures that no coalition of agents could unilaterally deviate and achieve an alternative lottery that all its members prefer, given their proportional influence, implying agreement, nay consensus, over the randomized outcome. Second, when a single consensus statement is desired, we target the maximization of EW, seeking the best outcome for the least satisfied agent, which aligns with the idea that a consensus statement should be agreeable to all parties. We introduce constructive search algorithms (finite lookahead and beam search) that optimize this EW objective directly within the MDP. This offers a transparent and analyzable generation mechanism distinct from methods reliant on high-level prompting or post-hoc voting.

Our main contributions are:
\begin{enumerate}
\item A formal token-level MDP framework for fair consensus generation where agent rewards are derived from their language model policies.
\item A method to derive a stochastic generation policy that is provably in the ex-ante core, ensuring proportional fairness and stability.
\item The development and empirical validation of search algorithms that, by optimizing egalitarian welfare within the MDP, generate single consensus statements with improved worst-case agent alignment compared to methods that do not leverage this token-level structure or search.
\end{enumerate}

Through these contributions, we hope to establish a new direction for methods seeking to generate consensus statements from open-ended verbal opinions with provable fairness guarantees.

%%%%%%%%%%%%%%%%%%%%%%%%%%%%%%%%%%%%%%%%%%%%%%%%%%%%%%%%%%%%%%%%%%%%%%%%

\section{Related Work}
\label{sec:related_work}

\paragraph{Generative Social Choice.}
This field applies social choice principles to open-ended generation, such as creating text from diverse opinions \citep{fish2024generative, tessler2024ai, revel2025ai, boehmer2025generative}. Unlike methods that aggregate preferences over predefined alternatives, Generative Social Choice (GSC) generates the alternatives themselves. For example, \citet{tessler2024ai} employ iterative critiques and voting on complete statements for consensus, while \citet{fish2024generative} prompt LLMs to directly maximize egalitarian welfare within a larger framework aimed at a form of proportional representation. Our work differs by embedding fairness into the token-by-token construction of a consensus statement via a multi-objective MDP, treating each token selection as a public decision \citep{conitzer2017fair}. This provides a more granular and verifiable mechanism than post-hoc evaluations or high-level prompting.

\paragraph{Randomized Social Choice.}
We also draw from randomized social choice, which studies lotteries over outcomes. Our stochastic generation policy, which maximizes Nash Welfare for proportional fairness, connects to this area. Maximizing Nash Welfare is known to yield outcomes in the 1-core \citep{aziz2019fair, fain2018fair, ebadian2024optimized}. We extend these findings to the sequential decision-making context of this paper.

\paragraph{Guided Decoding.}
Guided decoding techniques steer LLM generation towards desired attributes at inference time, often using search algorithms. Methods like PPO-MCTS \citep{liu2024dont} and VAS \citep{han2024value} use a value network to guide generation, while MOD \citep{shi2024decoding} or COLLAB \citep{chakraborty2025collab} combine or switch policies. Our approach also uses search but derives token-level rewards from agent policies. Further, we explicitly frame generation as planning in a multi-objective MDP to optimize social choice objectives (Proportional Fairness, Egalitarian Welfare), rather than relying on a single pre-trained value model or heuristic model combinations.

%%%%%%%%%%%%%%%%%%%%%%%%%%%%%%%%%%%%%%%%%%%%%%%%%%%%%%%%%%%%%%%%%%%%%%%%

\section{Problem Setup \& Preliminaries}
\label{sec:problem_setup}

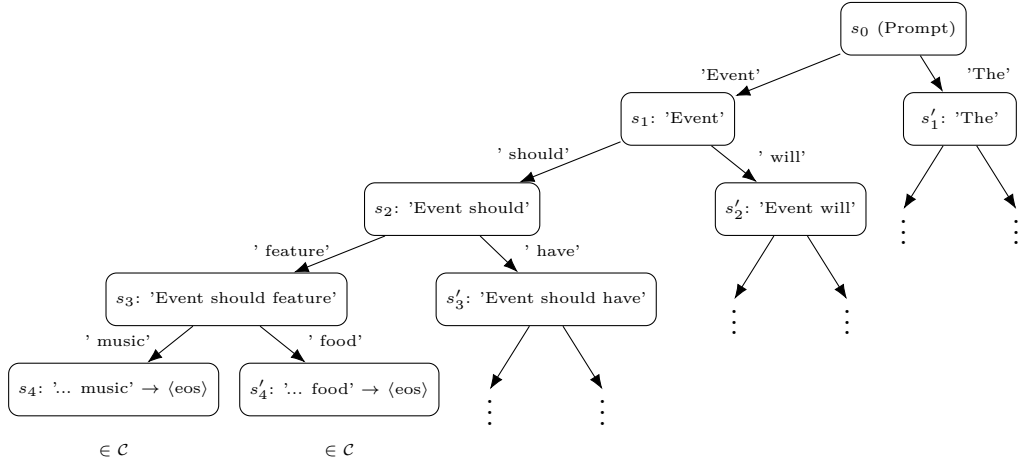
\begin{figure*}[t]
\centering
\begin{tikzpicture}[
    level distance=1.2cm,
    level 1/.style={sibling distance=10cm},    % Base distance between s1 and s1alt
    level 2/.style={sibling distance=6cm},    % Base distance between s2/s2alt and children of s1alt
    level 3/.style={sibling distance=6cm}, % Base distance between s3/s3alt and children of s2alt
    level 4/.style={sibling distance=3cm}, % Base distance between children of s3 and s3alt
    edge from parent/.style={draw, -{Latex[length=2mm, width=1.5mm]}},
    every node/.style={rectangle, draw, rounded corners, font=\tiny, align=center, minimum height=0.7cm, minimum width=1.5cm},
    leaf/.style={coordinate},
    vdots node/.style={draw=none, font=\normalsize},
    leaf label/.style={draw=none}
]
\node (root) {$s_0$ (Prompt)}
    child[sibling distance=6cm] { node (s1) {$s_1$: 'Event'} %%% LEFT (EXPANDED) SUBTREE - Uses default spacing %%%
        child { node (s2) {$s_2$: 'Event should'}
            child { node (s3) {$s_3$: 'Event should feature'} % Uses level 3 default (3.5cm) for its children
                child { node[level 4] (leaf1) {$s_4$: '... music' $\rightarrow$ $\langle \text{eos}\rangle$} % Uses level 4 default (2.5cm) between siblings
                    edge from parent node[left, draw=none, pos=0.4] {\tiny{' music'}}
                }
                child { node[level 4] (leaf2) {$s'_4$: '... food' $\rightarrow$ $\langle \text{eos}\rangle$}
                    edge from parent node[right, draw=none, pos=0.4] {\tiny{' food'}} % Corrected token label
                }
                edge from parent node[left, draw=none, pos=0.4] {\tiny{' feature'}}
            }
            child[sibling distance=2.5cm] { node (s3alt) {$s'_3$: 'Event should have'} %%% OVERRIDE: Make children of s3alt closer %%%
                child[sibling distance=1.5cm] { coordinate (leaf3_coord) edge from parent node[left, draw=none, pos=0.6] {} }
                child[sibling distance=1.5cm] { coordinate (leaf4_coord) edge from parent node[right, draw=none, pos=0.6] {} } % Placeholder token
                edge from parent node[right, draw=none, pos=0.4] {\tiny{' have'}}
            } % End of children for s2
            edge from parent node[left, draw=none, pos=0.3] {\tiny{' should'}} % Edge from s1 to s2
        } % End of s2 and its children
        child[sibling distance=3cm] { node (s2alt) {$s'_2$: 'Event will'} %%% OVERRIDE: Make children of s2alt closer %%%
             child[sibling distance=1.5cm] { coordinate (leaf5_coord) edge from parent node[left, draw=none, pos=0.6] {} } % Added placeholder token A
             child[sibling distance=1.5cm] { coordinate (leaf5b_coord) edge from parent node[right, draw=none, pos=0.6] {} } % Added second child with placeholder token B
             edge from parent node[right, draw=none, pos=0.3] {\tiny{' will'}}
        } % End of s2alt and its children
        edge from parent node[left, draw=none, pos=0.5] {\tiny{'Event'}} % Edge from root to s1
    } % End of s1 and its children
    child[sibling distance=1.5cm] { node (s1alt) {$s'_1$: 'The'}
        child[sibling distance=1.5cm] { coordinate (leaf6_coord) edge from parent node[left, draw=none, pos=0.6] {} } % Placeholder token
        child[sibling distance=1.5cm] { coordinate (leaf7_coord) edge from parent node[right, draw=none, pos=0.5] {} } % Placeholder token
        edge from parent node[right, draw=none, pos=0.5] {\tiny{'The'}}
    };

% Add vertical dots for unexpanded paths using 'at' and relative coordinate positioning
\node[vdots node] at ($(leaf3_coord) + (0,-0.2cm)$) {$\vdots$};
\node[vdots node] at ($(leaf4_coord) + (0,-0.2cm)$) {$\vdots$};
\node[vdots node] at ($(leaf5_coord) + (0,-0.2cm)$) {$\vdots$};
\node[vdots node] at ($(leaf5b_coord) + (0,-0.2cm)$) {$\vdots$};
\node[vdots node] at ($(leaf6_coord) + (0,-0.2cm)$) {$\vdots$};
\node[vdots node] at ($(leaf7_coord) + (0,-0.2cm)$) {$\vdots$};

% Add \in C labels ONLY for expanded leaves
\node[draw=none, below=0.1cm of leaf1] {$\in \mathcal{C}$};
\node[draw=none, below=0.1cm of leaf2] {$\in \mathcal{C}$};

\end{tikzpicture}
\caption{Illustration of the token-level generation tree. Edges represent actions (chosen tokens).}
\label{fig:token_tree_example_final_compact}
\end{figure*}

We consider a setting with a finite set of agents $N = \{1,2,\ldots,n\}$, each with a distinct opinion on a specific \textbf{Issue}. The goal is to generate a consensus text statement reflecting these perspectives fairly. The inputs to the process include descriptions of the issue, a policy representing each agent (which can be derived from free-form text expressing their opinion), and a \textbf{reference policy} (e.g., a base language model) that is used to propose tokens in the consensus statement.

\paragraph{Agent Policies.}
Each agent $i \in N$ is represented by a \emph{policy} $\pi_i$, which assigns a likelihood $\pi_i(s,a) \in [0, 1]$ to each action $a$ given the state $s$. Intuitively, $\pi_i(s,a)$ reflects how closely an action aligns with agent $i$'s preference at state $s$. This policy could be an LLM fine-tuned (ideally with DPO \citep{rafailov2023direct}) or base policy prompted with agent $i$'s viewpoint.

\paragraph{Token-Level MDP}
We model text generation as a deterministic, discrete-time Markov Decision Process defined by the tuple $(S, A, T, \mathbf{R})$. Here, $S$ is the state space of partial text sequences (prefixes), including initial $s_0$ and terminal states. $A$ is the action space consisting of the token vocabulary plus a special end-of-sequence token $\langle \text{eos}\rangle$. $T$ is the deterministic transition function where $T(s,a) = s \| a$ appends the chosen token; selecting $a = \langle \text{eos}\rangle$ leads to a terminal state representing a completed statement $X$. Finally, $\mathbf{R}$ represents the agent-specific reward functions. We define two types of rewards based on agent policies, serving different analytical purposes:
\begin{enumerate}
    \item \textbf{Log-Likelihood Reward:} $r_i^{\log}(s,a) = \beta \log \pi_i(s,a)$. This formulation aligns with implicit rewards in preference learning \citep{rafailov2024from}, where $\beta > 0$ is a scaling factor. This is non-positive and is suitable for additive utility accumulation along a path.
    \item \textbf{Likelihood Reward:} $r_i^{\text{prob}}(s,a) = \pi_i(s,a)$. This reward uses the direct probability, ensuring non-negativity ($r_i^{\text{prob}} \ge 0$), which is needed for social welfare functions involving products or ratios, such as Nash Welfare.
\end{enumerate}
We denote by $\mathcal{C}$ the set of all possible complete paths (sequences ending in $\langle \text{eos}\rangle$) from $s_0$.

In practice, we assume that at each non-terminal state $s$, we only consider a finite set of $B$ possible next tokens $A_B(s) \subseteq A$. This set could be the model's vocabulary or a subset chosen by a base language model giving us the $B$ most likely next tokens. With this finite branching factor $B$ and a maximum sequence length $L_{\max}$, the set $\mathcal{C}$ of complete paths is finite (bounded by $B^{L_{\max}}$).

This sequential token selection process naturally defines a tree structure rooted at $s_0$. Each edge represents choosing a token from $A_B(s_t)$, and each node represents a partial sequence $s_t$. To make this concrete, we provide an example in \autoref{fig:token_tree_example_final_compact}.

\paragraph{Agent Utilities.}
Given a completed sequence $X = (a_1, \dots, a_\ell = \langle \text{eos}\rangle)$ corresponding to states $(s_0, s_1, \dots, s_\ell)$, we define two corresponding utility functions for each agent $i$, derived from the respective reward types:
\begin{enumerate}
    \item \textbf{Additive Log-Utility:} Primarily used for evaluating single paths based on cumulative log-likelihood.
    \begin{align*}
        U_i^{\log}(X) &= \sum_{t=1}^{\ell} r_i^{\log}(s_{t-1}, a_t) = \sum_{t=1}^{\ell} \beta \log \pi_i(s_{t-1}, a_t) 
                       = \beta \log \left( \prod_{t=1}^{\ell} \pi_i(s_{t-1}, a_t) \right)
    \end{align*}
    \item \textbf{Multiplicative Probability Utility:} Primarily used for evaluating distributions via expected utility, forming the basis for Nash Welfare and Proportional Fairness calculations.
    \[
    U_i^{\text{prob}}(X) = \prod_{t=1}^{\ell} r_i^{\text{prob}}(s_{t-1}, a_t) = \prod_{t=1}^{\ell} \pi_i(s_{t-1}, a_t) = P_i(X)
    \]
    This represents the joint probability of sequence $X$ under agent $i$'s policy.
\end{enumerate}
These are related by $U_i^{\log}(X) = \beta \log U_i^{\text{prob}}(X)$, with $U_i^{\text{prob}}(X) > 0$.

\subsection{Deterministic vs. Stochastic Policies}
We consider two types of policies for generating fair consensus statements in our token-level MDP:
\begin{enumerate}
    \item A deterministic policy \(\mu(s)\) that gives us a single path $X \in \mathcal{C}$. To evaluate deterministic policies, we adopt additive log-utilities $U_i^{\log}(X)$.
    \item A stochastic policy \(\pi(a|s)\) that gives us a distribution $p \in \Delta(\mathcal{C})$ over paths (a lottery). When assessing this type of outcome, we consider the expected probability-based utility
    \[
    \mathbb{E}_{X \sim p}[U_i^{\text{prob}}(X)] = \sum_{X \in \mathcal{C}} p(X) U_i^{\text{prob}}(X).
    \]
    When it is clear from the context, we adopt the shorthand of \(U_i^{\text{prob}}(p)\) to refer to the expected utility of agent \(i\) for a given distribution over paths. This expected utility $U_i^{\text{prob}}(p)$ is guaranteed to be non-negative, which is required for downstream fairness measures.
\end{enumerate}

\paragraph{The Fairness of a Deterministic Policy.}
We assess the fairness of a path $X \in \mathcal{C}$ given by a deterministic policy through its egalitarian welfare (EW), drawing from Rawls' maximin principle~\citep{rawls1971egalitarian}. This is defined using the additive log-utilities $U_i^{\log}(X)$:
\begin{equation}
\text{EW}^{\log}(X) = \min_{i \in N} U_i^{\log}(X) = \min_{i \in N} \sum_{t=1}^{\ell} \beta \log \pi_i(s_{t-1}, a_t).
\label{eq:egal} % Assuming you want to keep the label
\end{equation}
Maximizing $\text{EW}^{\log}(X)$ means finding the path whose cumulative log-likelihood is highest for the agent who prefers it least. By maximizing the minimum utility, the egalitarian objective promotes broadly acceptable outcomes, which supports the requirement that a consensus statement should be agreeable to all.

\paragraph{The Fairness of a Stochastic Policy.}
To analyze the fairness of a distribution over paths given by our stochastic policy, we use the non-negative expected utilities $U_i^{\text{prob}}(p)$. Our goal is to find a stochastic policy that gives us a distribution \(p \in \Delta(\mathcal{C)}\) that is in the ex-ante core~\citep{shapley1971cores}. Following \citet{fain2018fair}, we define the ex-ante core as follows. 

\begin{definition}[Ex-Ante Core]
A distribution $p \in \Delta(\mathcal{C})$ is in the \textit{ex-ante core} if there is no coalition $S \subseteq N$ and alternative distribution $p'$ such that
\[
\frac{|S|}{|N|} \cdot U_i^{\text{prob}}(p') \geq U_i^{\text{prob}}(p), \quad \forall i \in S,
\]
with strict inequality for at least one agent $i \in S$. 
\end{definition}

Intuitively, if a distribution over paths is in the ex-ante core, there is no possible coalition of agents that could use their proportional share of probability to create a distribution that has strictly higher expected utility for at least one agent and not less expected utility for all other agents (i.e., a Pareto improvement)~\citep{aziz2019fair}. This resistance to coalitional deviation is desirable for consensus statements, as a distribution in the ex-ante core represents an outcome that all parties have implicitly agreed to (since they cannot use their proportional share of utility to do something better).

%%%%%%%%%%%%%%%%%%%%%%%%%%%%%%%%%%%%%%%%%%%%%%%%%%%%%%%%%%%%%%%%%%%%%%%%

\section{Defining a Stochastic Policy in the Core}
\label{sec:stochastic_policies}

Having established the token-level MDP and fairness criteria, we now turn to defining a \textit{stochastic generation policy} $\pi^*$ that produces a distribution $p^*$ over complete consensus statements $\mathcal{C}$ (i.e., a lottery) that is in the ex-ante core. 

To do so, we first generate the tree of possible token sequences with branching factor $B$ and maximum length $L$. We then, find the distribution over paths \(p^*\) that maximizes Nash welfare (NW). The NW optimal distribution is

\[
p^*= \max_{p\in\Delta(\mathcal{C})}\text{NW}(p) = \max_{p\in\Delta(\mathcal{C})} \prod_{i=1}^n U_i^{\text{prob}}(p). 
\]

 We can find \(p^*\) by optimizing over the probability simplex $\Delta(\mathcal{C})=\{p\in\mathbb{R}^{|\mathcal{C}|}_{\ge 0}:\sum_{X\in\mathcal{C}}p(X)=1\}$. Indexing the leaves as $\mathcal{C}=\{X_1,\dots,X_m\}$ and defining $u_i\in\mathbb{R}^m_{\ge 0}$ by $u_i(j)=U_i^{\text{prob}}(X_j)$, we get that each agent’s expected utility under distribution $p$ is
\[
U_i^{\text{prob}}(p)=\sum_{j=1}^m u_i(j)\,p_j \;=\; u_i^\top p,
\]
so the Nash welfare program is
\[
p^*=\max_{p\in\Delta(\mathcal{C})}\ \sum_{i=1}^n \log\!\big(u_i^\top p\big).
\]
Note, each term $\log(u_i^\top p)$ is concave (log is concave and increasing; $u_i^\top p$ is affine), and the simplex is convex; hence this is a convex program with polynomial-time algorithms and strong duality under a mild positivity condition (i.e., $u_i^\top p>0$ for all $i$) \citep{boyd2004convex}. 

Moreover, we know that any maximizer of $\prod_i U_i^{\text{prob}}(p)$ lies in the core \citep{fain2018fair,aziz2019fair,ebadian2024optimized}. So, we can work backwards from this distribution to find a stochastic policy in the core.

Specifically, given the target distribution $p^* \in \Delta(\mathcal{C})$ that maximizes Nash welfare, we derive the stochastic policy $\pi^*$ that generates this distribution. 
% Executing $\pi^*$ involves sequentially sampling the next chunk based on probabilities derived from $p^*$.
To define $\pi^*$, we introduce some notation. For any state (prefix) $s$ in the token-level MDP:
\begin{itemize}
    \item Let $\mathcal{C}(s) \subseteq \mathcal{C}$ be the set of all complete paths (leaves) that pass through state $s$.
    \item Let $\mathcal{C}(s, a) \subseteq \mathcal{C}(s)$ be the subset of paths in $\mathcal{C}(s)$ where the next action taken from state $s$ is $a$. Note that $\mathcal{C}(s, a) = \mathcal{C}(s\|a)$, where $s\|a$ is the state reached after taking choosing token $a$.
    \item For any subset of leaves $L \subseteq \mathcal{C}$, let $P^*(L) = \sum_{X \in L} p^*(X)$ be the total probability mass assigned by the NW optimal distribution $p^*$ to the leaves in $L$.
\end{itemize}
Note that $P^*(\mathcal{C}(s_0)) = P^*(\mathcal{C}) = 1$.

With this, we can define the policy $\pi^*$ at any given state $s$.

\begin{definition}[Stochastic Policy Induced by $p^*$]
\label{def:induced_policy}
Let $p^*$ be a distribution over the leaf nodes $\mathcal{C}$. The induced stochastic policy $\pi^*$ at a non-terminal state $s$ assigns the probability of taking the next action (token) \(a\) as:
\begin{align}
\pi^*(a|s) = \begin{cases}
\frac{P^*(\mathcal{C}(s, a))}{P^*(\mathcal{C}(s))} & \text{if } P^*(\mathcal{C}(s)) > 0 \\
0 & \text{if } P^*(\mathcal{C}(s)) = 0
\end{cases}
\end{align}

\end{definition}
This represents the conditional probability, according to the NW optimal distribution $p^*$, of selecting token $a$ next, given that the generation process has reached state $s$. If state $s$ has zero probability of being reached under $p^*$ (i.e., $P^*(\mathcal{C}(s)) = 0$), then the probability of taking any action from $s$ is also zero. 

\autoref{alg:core_policy} summarizes the process of finding a policy in the ex-ante core. We will now briefly turn to the runtime of this algorithm. 

\paragraph{Runtime.}
We optimize the Nash–welfare program on $\Delta(\mathcal{C})$ (with $m=|\mathcal{C}|$ as above) and then induce $\pi^*$ via the conditional masses $P^*(\mathcal{C}(s))$ and $P^*(\mathcal{C}(s,a))$. Forming all agent–leaf utilities costs $O(nmL)$. We then minimize $-\sum_i \log(u_i^\top p)$ on the simplex with Frank–Wolfe: each iteration computes $\nabla F(p)=\sum_i u_i/(u_i^\top p)$ in $O(nm)$ time and uses a coordinate linear oracle; the method attains $\varepsilon$-suboptimality in $O(1/\varepsilon)$ iterations \citep{jaggi2013revisiting}. Hence the optimization time is $O(nm/\varepsilon)$. The conditional masses needed for $\pi^*$ are obtained by a single bottom-up pass over the generation tree, which is linear in the number of leaves when $B$ is fixed, i.e., $\Theta(m)$. Altogether, the total runtime is
\[
T_{\text{total}} \;=\; O\!\big(nmL \;+\; nm/\varepsilon \;+\; m\big),
\]
which is polynomial in $n$, $m$, and $1/\varepsilon$.

\subsection{Properties of the Stochastic Policy}
\label{ssec:properties_induced_policy}

We now establish that executing this policy $\pi^*$ from the initial state $s_0$ indeed generates the target distribution $p^*$.

\begin{theorem}[Equivalence of Policy-Induced Distribution and Target Lottery]
\label{thm:policy_equivalence}
Let $p_{\pi^*}$ be the distribution over $\mathcal{C}$ generated by executing the policy $\pi^*$ (defined in Definition~\ref{def:induced_policy}) from the initial state $s_0$. Then $p_{\pi^*} = p^*$.
\end{theorem}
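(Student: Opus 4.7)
The plan is to establish the equivalence pointwise: for each leaf $X \in \mathcal{C}$, show that the probability assigned by $p_{\pi^*}$ equals $p^*(X)$, using a telescoping argument on the conditional masses $P^*(\mathcal{C}(s))$ along the path.

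First, I would fix an arbitrary path $X = (a_1, \dots, a_\ell)$ with corresponding state sequence $s_0, s_1, \dots, s_\ell$, where $s_t = s_{t-1}\|a_t$ and $s_\ell$ is terminal. By the definition of how a stochastic policy induces a distribution over paths in a deterministic MDP, we have $p_{\pi^*}(X) = \prod_{t=1}^{\ell} \pi^*(a_t \mid s_{t-1})$. The key observation is that the set of leaves passing through $s_{t-1}$ whose next action is $a_t$ is exactly the set of leaves passing through the successor state $s_t$, i.e., $\mathcal{C}(s_{t-1}, a_t) = \mathcal{C}(s_t)$. Substituting into Definition~\ref{def:induced_policy} (in the nonzero case) gives $\pi^*(a_t \mid s_{t-1}) = P^*(\mathcal{C}(s_t)) / P^*(\mathcal{C}(s_{t-1}))$.

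Next, I would telescope the product:
\[
p_{\pi^*}(X) \;=\; \prod_{t=1}^{\ell} \frac{P^*(\mathcal{C}(s_t))}{P^*(\mathcal{C}(s_{t-1}))} \;=\; \frac{P^*(\mathcal{C}(s_\ell))}{P^*(\mathcal{C}(s_0))}.
\]
The denominator equals $P^*(\mathcal{C}) = 1$ since every leaf passes through the root, and the numerator equals $p^*(X)$ since $s_\ell$ is terminal and $\mathcal{C}(s_\ell) = \{X\}$. This yields $p_{\pi^*}(X) = p^*(X)$ whenever all intermediate masses $P^*(\mathcal{C}(s_{t-1}))$ are strictly positive.

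The one subtlety, and the step most likely to trip up the argument, is handling the case where some prefix $s_t$ along $X$ has zero mass under $p^*$. If $P^*(\mathcal{C}(s_t)) = 0$, the ratios above are undefined and the telescoping is not directly valid. To address this, I would argue separately: since $X \in \mathcal{C}(s_t)$, monotonicity of $P^*$ forces $p^*(X) \le P^*(\mathcal{C}(s_t)) = 0$, so $p^*(X) = 0$. Moreover, letting $t^*$ be the smallest such index, the policy definition gives $\pi^*(a_{t^*+1} \mid s_{t^*}) = 0$, hence $p_{\pi^*}(X) = 0$ as well. Combined with the positive-mass case above, this establishes $p_{\pi^*}(X) = p^*(X)$ for every $X \in \mathcal{C}$, completing the proof.
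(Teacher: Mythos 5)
Your proof is correct and is essentially the paper's argument: the paper proves by induction on depth that the reach probability of every state equals $P^*(\mathcal{C}(s))$, which is exactly your telescoping product unrolled, and it uses the same key identity $\mathcal{C}(s,a)=\mathcal{C}(s\|a)$ together with the same separate treatment of zero-mass prefixes. No gaps.
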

The proof is presented in \autoref{app:proof}. This equivalence leads to the desired ex-ante fairness guarantee for the policy $\pi^*$.

\begin{corollary}[Core Membership of Stochastic Policy]
\label{cor:policy-core}
Let $p^*$ be a distribution over $\mathcal{C}$ that maximizes Nash Welfare (and is therefore in the ex-ante core). Let $\pi^*$ be the stochastic policy derived from $p^*$ according to Definition~\ref{def:induced_policy}. Then the distribution $p_{\pi^*}$ generated by executing $\pi^*$ is in the ex-ante core.
\end{corollary}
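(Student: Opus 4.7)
The plan is to chain two facts already available in the excerpt: first, the policy–distribution equivalence established by Theorem~\ref{thm:policy_equivalence}, and second, the classical result that a Nash welfare maximizer over a convex outcome set lies in the ex-ante core. Since the ex-ante core is a property of a distribution over $\mathcal{C}$ (it quantifies over coalitions and alternative distributions $p'$), membership in the core depends only on the lottery that is realized, not on the generative mechanism that produced it. Hence if the policy and the target lottery induce the same distribution, they inherit the same coalitional-stability properties.

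Concretely, the first step is to apply Theorem~\ref{thm:policy_equivalence} to obtain $p_{\pi^*}=p^*$ pointwise on $\mathcal{C}$. This turns the question about $\pi^*$ into a question about $p^*$. The second step is to invoke the hypothesis of the corollary together with the results of \citet{fain2018fair,aziz2019fair,ebadian2024optimized}: because $p^*$ maximizes $\prod_{i\in N} U_i^{\text{prob}}(p)$ over the convex set $\Delta(\mathcal{C})$, and the utilities $U_i^{\text{prob}}(p)=u_i^\top p$ are linear and non-negative, $p^*$ lies in the ex-ante core, i.e., no coalition $S\subseteq N$ and alternative $p'\in\Delta(\mathcal{C})$ can satisfy $\tfrac{|S|}{|N|}U_i^{\text{prob}}(p')\ge U_i^{\text{prob}}(p^*)$ for every $i\in S$ with strict inequality somewhere. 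Substituting $p^*=p_{\pi^*}$ into this non-deviation condition yields the analogous statement for $p_{\pi^*}$, which is the conclusion.

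The only thing worth being careful about is that the core condition in Definition~1 is defined with the proportional-share scaling $|S|/|N|$, which must match the notion of core used by the cited NW maximization results. This is the formulation in \citet{fain2018fair} (and restated in \citet{aziz2019fair,ebadian2024optimized}) applied to a convex feasible set with linear, non-negative utilities, which is exactly the setting here ($\Delta(\mathcal{C})$ convex, $U_i^{\text{prob}}(p)=u_i^\top p$ linear and non-negative). A brief one-line remark noting this alignment suffices; no fresh argument is required.

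The main obstacle is essentially none: the corollary is a bookkeeping consequence of Theorem~\ref{thm:policy_equivalence} and the known NW-implies-core result. If anything, the subtle point to flag is a well-definedness issue: the NW maximizer requires $u_i^\top p>0$ for all $i$ (the mild positivity condition mentioned in the runtime paragraph), so one may wish to note that under this standing assumption the maximizer exists and the derivation of $\pi^*$ via Definition~\ref{def:induced_policy} is well-defined on the support of $p^*$, making the equivalence $p_{\pi^*}=p^*$ hold without measure-zero caveats.
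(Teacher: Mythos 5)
Your proposal is correct and follows exactly the paper's own argument: apply Theorem~\ref{thm:policy_equivalence} to identify $p_{\pi^*}$ with $p^*$, then invoke the known fact that the Nash welfare maximizer lies in the ex-ante core. The additional remarks on the positivity condition and on matching the $|S|/|N|$ core formulation are sensible but not part of the paper's (two-line) proof.
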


\begin{proof}
By Theorem~\ref{thm:policy_equivalence}, the distribution generated by policy $\pi^*$ is $p_{\pi^*} = p^*$. Since $p^*$ was chosen to maximize Nash Welfare over $\mathcal{C}$, it is in the core. Therefore, $p_{\pi^*}$ is also in the core.
\end{proof}

This simple result confirms that our procedure, which first finds the distribution maximizing Nash Welfare $p^*$ and then executes the derived policy $\pi^*$, yields a stochastic policy in the core. And thus, we have an ex-ante fair way to generate consensus statements.

\begin{algorithm}[t]
\caption{Compute core-stochastic policy $\pi^*$}
\label{alg:core_policy}
\begin{algorithmic}[1]
\Require Leaf set $\mathcal{C}=\{X_1,\dots,X_m\}$, agent utilities $U_i^{\text{prob}}(X_j)$
\State Build $u_i\in\mathbb{R}^m_{\ge 0}$ with $u_i(j)=U_i^{\text{prob}}(X_j)$
\State Solve $p^* \in \arg\max_{p\in\Delta(\mathcal{C})} \sum_{i=1}^n \log(u_i^\top p)$
\State For every state $s$ in the tree, cache $P^*(\mathcal{C}(s))$ and, for each enabled action $a$ at $s$, cache $P^*(\mathcal{C}(s,a))$.

\If{$P^*(\mathcal{C}(s))>0$}
  \State $\pi^*(a\mid s)\gets \dfrac{P^*(\mathcal{C}(s,a))}{P^*(\mathcal{C}(s))}$
\Else
  \State $\pi^*(a\mid s)\gets 0$
\EndIf
\State \Return $\pi^*$
\end{algorithmic}
\end{algorithm}

\subsection{Empirical Core Test}\label{sec:core_synth}

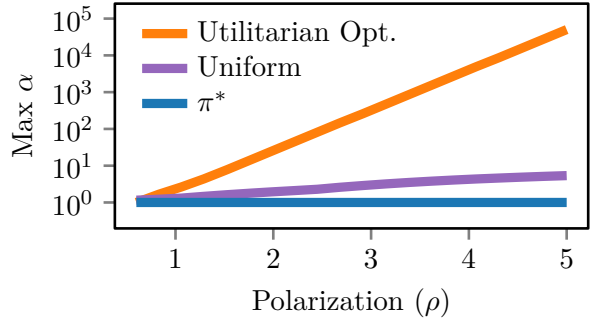
\begin{figure}
\centering
% (Works without any filters; just plots all three lines.)
\begin{tikzpicture}
\begin{axis}[
  width=3.10in, height=1.80in,
  ymode=log, log basis y=10,
  enlarge x limits=0.05,
  enlarge y limits=0.15,
  xlabel={Polarization ($\rho$)},
  ylabel={Max $\alpha$},
  ytick={1,10,100,1000,10000,100000},
  yticklabels={$10^{0}$,$10^{1}$,$10^{2}$,$10^{3}$,$10^{4}$,$10^{5}$},
  tick pos=left, tick align=outside,
  axis line style={line width=1pt},
  tick style={line width=1pt},
  legend pos=north west,
  legend cell align={left},                 
  legend style={
    fill=none, draw=none,
    nodes={align=left, anchor=west}         
  },
  line join=round,
]

% --- Utilitarian Opt. (orange) ---
\addplot+[no marks, draw=taborange, line width=3.5pt]
coordinates {
(0.60,1.0000) (0.83,1.6770) (1.06,2.6392) (1.29,4.3734) (1.53,7.8455)
(1.76,14.0742) (1.99,25.2478) (2.22,45.2922) (2.45,81.2500) (2.68,145.7550)
(2.92,261.4711) (3.15,469.0550) (3.38,841.4416) (3.61,1509.4689)
(3.84,2707.8483) (4.07,4857.6308) (4.31,8714.1428) (4.54,15632.3707)
(4.77,28043.0352) (5.00,50306.6258)
};
\addlegendentry{Utilitarian Opt.}

% --- Uniform (purple) ---
\addplot+[no marks, draw=tabpurple, line width=3.5pt]
coordinates {
(0.60,1.1548) (0.83,1.2262) (1.06,1.3154) (1.29,1.4672) (1.53,1.6293)
(1.76,1.7875) (1.99,1.9481) (2.22,2.1088) (2.45,2.2869) (2.68,2.5893)
(2.92,2.8933) (3.15,3.1958) (3.38,3.4945) (3.61,3.7876) (3.84,4.0735)
(4.07,4.3515) (4.31,4.6208) (4.54,4.8812) (4.77,5.1327) (5.00,5.3751)
};
\addlegendentry{Uniform}

% --- p* (blue) ---
\addplot+[no marks, draw=tabblue, line width=3.5pt]
coordinates {
(0.60,1.0000) (0.83,1.0001) (1.06,1.0001) (1.29,1.0002) (1.53,1.0002)
(1.76,1.0002) (1.99,1.0003) (2.22,1.0003) (2.45,1.0003) (2.68,1.0003)
(2.92,1.0004) (3.15,1.0004) (3.38,1.0005) (3.61,1.0005) (3.84,1.0007)
(4.07,1.0007) (4.31,1.0008) (4.54,1.0008) (4.77,1.0009) (5.00,1.0009)
};
\addlegendentry{$\pi^{*}$}

\end{axis}
\end{tikzpicture}
\caption{Maximum coalition improvement vs.\ polarization.}
\label{fig:core_violation_plot}
\end{figure}

Here we empirically validate that the NW optimal \emph{policy} $\pi^*$ is in fact in the ex-ante core, and demonstrate that a uniform policy and a utilitarian optimal policy are not in the ex-ante core.

\paragraph{Environment} For demonstration we use a small $B$-ary token tree ($B{=}3$, depth $L{=}4$). Each agent $i$ has a next-token policy
\[
\pi_i(a\mid s)\;=\;\mathrm{softmax}_a\!\big(\rho\, w_i^\top(z+v_{t,a})\big),
\]
where $z$ is the running state embedding, $v_{t,a}$ is the token vector, and $w_i$ is the agent vector. The scalar \emph{polarization} $\rho\!\ge\!0$ controls how concentrated preferences are. When $\rho=0$, $\pi_i(a\mid s)=1/B$. For two actions $a,b$,
\[
\frac{\pi_i(a\mid s)}{\pi_i(b\mid s)} \;=\; \exp\!\big(\rho\, w_i^\top(v_{t,a}-v_{t,b})\big),
\]
so increasing $\rho$ multiplies odds exponentially and makes agents much more confident about their preferred action. 

To initialize the experiment we draw token vectors $\tilde v_{t,a}\sim\mathcal N(0,I_d)$ and agent vectors $\tilde w_i\sim\mathcal N(0,I_d)$, set $v_{t,a}=\tilde v_{t,a}/\|\tilde v_{t,a}\|_2$ and $w_i=\tilde w_i/\|\tilde w_i\|_2$, and define the state vector at prefix $s=(a_1,\ldots,a_{t-1})$ as $z(s)=\sum_{\tau=1}^{t-1} v_{\tau,a_\tau}$.

\paragraph{Baselines.}
We compare \(\pi^*\) to a \emph{uniform} policy that induces a uniform distribution over leaves and a \emph{utilitarian} policy that deterministically follows the single path maximizing $\sum_i U_{ij}$.

\paragraph{Blocking test.} For any policy $\pi$, let $p_\pi$ be its induced leaf distribution and $u_i(\pi)=U_i^\top p_\pi$. For a coalition $S$ of size $r$, give it a budget $r/n$ of probability mass and solve a small LP to find the largest factor $\alpha$ such that all $i\in S$ can secure at least $\alpha\,u_i(\pi)$. The \emph{maximum coalition improvement}
\[
\alpha^*(\pi)\;=\;\max_{S\neq\emptyset}\;\max_{p':\,\mathbf{1}^\top p'=r/n,\,p'\ge 0}\;
\min_{i\in S}\frac{U_i^\top p'}{u_i(\pi)}
\]

The value of \(\alpha^*(\pi)\) is $>1$ exactly when a coalition can block $\pi$. Figure~\ref{fig:core_violation_plot} plots $\alpha^*(\pi)$ versus polarization $\rho$ (log $y$-axis).

\paragraph{Takeaways.} \autoref{fig:core_violation_plot} shows the result. As $\rho$ increases, agents’ own policies become more concentrated. The Nash-welfare \emph{policy} stays at $\alpha^*(\pi^*)= 1$ across all $\rho$, consistent with the theoretical result. The utilitarian policy becomes highly blockable, and the uniform policy becomes steadily easier to block.

\subsection{Computational Tractability via Token Chunking}

The set of complete statements $\mathcal{C}$ grows as $|\mathcal{C}|\approx B^{L_{\max}}$ for branching factor $B$ and maximum length $L_{\max}$. Enumerating leaves and solving for $p^*$ over $\mathcal{C}$ could be infeasible for long sequences.

We therefore restrict attention to a coarser action space obtained by grouping tokens into macro-actions.

\begin{definition}[Token chunking]
A chunking scheme $\mathcal{K}$ partitions positions into contiguous blocks $\{k_1,\dots,k_m\}$, where each $k_j$ contains one or more tokens and the concatenation of chosen blocks forms a complete statement. Decoding now selects entire blocks. The corresponding feasible set of leaves is $\mathcal{C}_{\mathcal{K}}\subseteq \mathcal{C}$.
\end{definition}

With fixed chunk size $c$, the effective depth drops from $L$ to $\lceil L/c\rceil$. Running \autoref{alg:core_policy} on the chunked tree produces a lottery that lies in the ex-ante core \emph{with respect to} $\mathcal{C}_{\mathcal{K}}$; all coalitions are evaluated against this feasible set. This interpretation mirrors real elections: not every eligible candidate runs, yet we judge the outcome by comparing the candidates who did run. Here, chunking plays the role of eligibility: it narrows the set of feasible statements, and fairness is defined relative to that set.

However, this restriction does have a consequence. Because $\mathcal{C}_{\mathcal{K}}\subseteq \mathcal{C}$, the optimal Nash welfare over $\mathcal{C}_{\mathcal{K}}$ cannot exceed that over $\mathcal{C}$. Moreover, no uniform multiplicative guarantee is possible without further structure:

\begin{theorem}[No constant-factor approximation under chunking]\label{thm:no-const}
For any constant $R>1$, there exists a two-agent instance, a finite set of leaves $C$, and a chunked subset $K\subset C$ such that, writing
\[
\mathrm{NW}_Y(p)\;=\;\prod_{i=1}^{2}\Big(\sum_{x\in Y} u_i(x)\,p(x)\Big)\quad\text{for }Y\subseteq C,\;p\in\Delta(Y),
\]
and letting $p_Y^*\in\arg\max_{p\in\Delta(Y)} \mathrm{NW}_Y(p)$, we have
\[
\frac{\mathrm{NW}_C(p_C^{*})}{\mathrm{NW}_K(p_K^{*})}\;>\;R.
\]
Hence the approximation ratio of the chunked solution, measured against the unchunked optimum, is unbounded.
\end{theorem}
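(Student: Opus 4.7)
My approach is to exhibit a parameterized two-agent pathology in which the unchunked Nash-welfare optimum can randomize between two ``polar'' leaves preferred by opposite agents, while the chunked subset retains only a ``centrist'' leaf whose utility to both agents can be pushed arbitrarily low. Driving the centrist utilities to zero sends the Nash-welfare ratio to infinity, which is all the claim requires.

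Concretely, I would fix $R>1$, pick $\epsilon\in(0,1/2)$ to be chosen at the end, and take two agents, leaf set $C=\{x_1,x_2,x_3\}$, and utilities
\[
u_1(x_1)=u_2(x_2)=1,\qquad u_1(x_2)=u_2(x_1)=0,\qquad u_1(x_3)=u_2(x_3)=\epsilon.
\]
For $p=(\alpha,\beta,\gamma)\in\Delta(C)$ one has $\mathrm{NW}_C(p)=(\alpha+\epsilon\gamma)(\beta+\epsilon\gamma)$. A short Lagrangian argument on the simplex shows that for $\epsilon<1/2$ the maximum is attained at $\gamma=0$, $\alpha=\beta=1/2$, giving $\mathrm{NW}_C(p_C^{*})=1/4$; intuitively, the centrist leaf cannot match the balancing power of a uniform mixture of the polar leaves, so any mass on $x_3$ strictly hurts both expected utilities.

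Next I would realize $K=\{x_3\}$ as a genuine chunked subset by encoding the three leaves as three distinct token sequences and letting the chunking scheme $\mathcal{K}$ consist of a single block spanning all positions whose enumerated list of macro-action realizations contains only the sequence encoding $x_3$. Since the definition of token chunking fixes a partition of positions into contiguous blocks and says decoding selects among enumerated block realizations, trimming the list of available realizations at a block is within the definition, so $\mathcal{C}_{\mathcal{K}}=\{x_3\}$ is a legal chunked subset. Then $p_K^{*}=\delta_{x_3}$, $\mathrm{NW}_K(p_K^{*})=\epsilon^2$, and
\[
\frac{\mathrm{NW}_C(p_C^{*})}{\mathrm{NW}_K(p_K^{*})}\;\geq\;\frac{1}{4\epsilon^2},
\]
so picking $\epsilon<1/(2\sqrt{R})$ forces the ratio to exceed $R$.

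The delicate part is the ``chunked subset'' clause: one must argue that an arbitrarily restrictive subset like $\{x_3\}$ really is producible by some chunking scheme, rather than something richer. Because the paper's definition only constrains the partition of positions (not how many macro-actions are enumerated per block), I expect a one-block scheme with a singleton realization list to suffice; if a reader insists the chunker must enumerate every block-level continuation, one can instead place two token choices at one position so that the trivial block actually contains a real branch that the chunker happens not to use, without changing the algebra above. Everything else (maximizing $\alpha\beta$ on the two-dimensional simplex, plugging into $\mathrm{NW}_K$) is routine, so the conceptual work lies entirely in choosing utilities so that an unchunked mixture exploits exactly the leaves the chunking excludes.
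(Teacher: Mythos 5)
Your proof is correct and rests on the same mechanism as the paper's: construct an instance in which chunking discards exactly the leaves supporting high Nash welfare and retains only a leaf whose utility to both agents is driven toward zero, so the ratio $\mathrm{NW}_C(p_C^*)/\mathrm{NW}_K(p_K^*)$ blows up. The paper's construction is even more economical --- two leaves with \emph{identical} utilities for both agents ($u_i(x^\star)=1-\varepsilon$, $u_i(x^-)=\sqrt{\delta}$), so the unchunked optimum is trivially the point mass on $x^\star$ and no simplex optimization needs checking; your three-leaf version requires the (correct) verification that $(\alpha+\epsilon\gamma)(\beta+\epsilon\gamma)$ is maximized at $\gamma=0$, but in exchange exhibits an unchunked optimum that is a genuine lottery, which arguably illustrates the randomization-based gap more vividly. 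Your worry about whether a singleton $K$ is realizable as a chunked subset applies equally to the paper's own proof, which likewise simply posits that chunking removes $x^\star$; the theorem statement only asks for some subset $K\subset C$, so neither argument has a gap there.
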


The proof appears in the Appendix. The theorem states a worst case that arises when chunking prunes precisely those statements that both agents value highly. 

Observe that we can think about this as an anytime algorithm: enlarging $\mathcal{C}_{\mathcal{K}}$ can only improve the Nash welfare, and the ex-ante core guarantee continues to hold relative to the current feasible set.

%%%%%%%%%%%%%%%%%%%%%%%%%%%%%%%%%%%%%%%%%%%%%%%%%%%%%%%%%%%%%%%%%%%%%%%%

\section{Generating a Single Statement}
\label{sec:single_statement}

Although our stochastic policy $\pi^*$ achieves a highly desirable ex-ante fairness guarantee, many practical applications require selecting a single consensus statement. In this case, our objective shifts from finding a fair distribution to identifying the single path (i.e., statement) that represents consensus.

\subsection{Finding the Egalitarian Path}
\label{ssec:rawlsian_path}

Given the token tree with leaf nodes $\mathcal{C}$, we aim to find a deterministic policy \(\mu^*\) that produces a single path $X^* \in \mathcal{C}$ that maximizes egalitarian welfare as defined in Equation~\ref{eq:egal}. Due to the size of the token tree, exhaustive search for $X^*$ may be intractable. We propose approximate algorithms to find high-quality paths, including finite-lookahead search and beam search, which are detailed below.

\paragraph{Finite Lookahead Search. }The finite lookahead algorithm operates with a rolling horizon. At each step $t$, it explores all possible paths $P$ of length up to $d$ originating from $s_t$. For each such path $P$, the algorithm evaluates the egalitarian welfare of the sequence formed by concatenating the path generated so far ($X_{prefix}$) with $P$. It then chooses the first action $a^*$ of the path $P^*$ that maximizes this lookahead evaluation, transitions to state $s_{t+1} = T(s_t, a^*)$, and repeats the process. This $d$-step lookahead can mitigate the potential for hedging inherent in greedy search. When no single immediate token is agreeable (i.e., results in high egalitarian welfare), a greedy method might select less informative tokens that avoid commitment. In contrast, a lookahead can identify longer sequences that, despite potentially controversial initial steps, lead to states with higher overall welfare, perhaps by expressing a concept with suitable qualifications. The algorithm is shown in \autoref{app:search_algos}.

\paragraph{Beam Search.} Beam search is a heuristic search algorithm that balances greedy search and exhaustive exploration, and has been effective in sequence generation tasks like machine translation and text generation \citep{martin2009speech, rafailov2024from}. Instead of pursuing only the single best option (greedy search) or all options (exhaustive search), beam search maintains a fixed number of the most promising partial paths (hypotheses), $w$ (the beam width), at each depth $t$. At each step, it expands paths in the beam by generating potential successor tokens. These candidates are then evaluated using the egalitarian welfare objective function, and only the top $w$ scoring paths are retained for the next step. The algorithm returns the highest-scoring complete path found within the beam at the maximum length or upon reaching a terminal state. The algorithm is shown in \autoref{app:search_algos}.

\section{Experiments}
\label{sec:experiments}

We conduct experiments with three main objectives. First, we test whether agent policies derived from prompting language models show meaningful correlation with human preferences. Second, we examine whether these prompted policies can perform credit assignment to generate meaningful token-level rewards (a necessary prerequisite for our search algorithms). Third, we evaluate how well our finite lookahead and beam search methods perform in generating single consensus statements when compared to baseline approaches. 
% All experiments were run on cloud servers and can be completed in under a week using 5 CPUs.

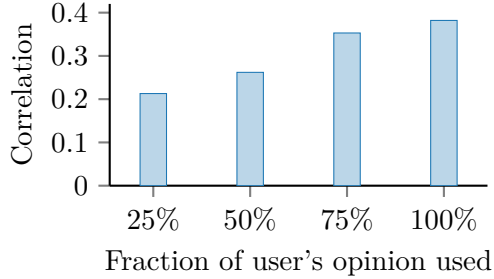
\begin{figure}
\centering
\begin{tikzpicture}
\begin{axis}[
  width=0.4\columnwidth, height=4.0cm,
  ybar,
  ymin=0,                      % bars start at the x-axis
  enlarge y limits={0.10,upper}, % top padding only
  enlarge x limits=0.15,
  axis x line*=bottom,         % no arrowheads; bottom axis only
  axis y line*=left,           % no arrowheads; left axis only
  axis line style={line width=0.8pt},
  tick style={line width=0.8pt},
  tick pos=left, tick align=outside,
  xlabel={Fraction of user's opinion used},
  ylabel={Correlation},
  xtick={25,50,75,100},
  xticklabels={25\%, 50\%, 75\%, 100\%},
  bar width=10pt
]
\addplot+[draw=tabblue, fill=tabblue!30]
  coordinates {(25,0.213) (50,0.262) (75,0.353) (100,0.382)};
\end{axis}
\end{tikzpicture}
\caption{Spearman's correlation between model likelihood and participants’ Likert ratings vs.\ the fraction of the user’s opinion provided.}
\label{fig:likelihood_likert_corr}
\end{figure}

\subsection{Opinion-length scaling experiment}
We test whether a chat LLM’s conditional likelihood of a policy statement, given a participant’s written opinion, predicts that participant’s 1–5 Likert rating, and whether supplying more of the participants opinion improves prediction.

\paragraph{Data.} We use the abortion dataset from \citet{fish2024generative}. This dataset contains data from 42 human participants who each provided their opinion on abortion in natural language. Each participant also rated five other candidate statements about abortion using a 1-5 Likert scale where higher values indicate higher agreement.
 
\paragraph{Method.}
We evaluate how well candidate statements capture participants' opinions using language model scoring. For each participant $u$ and statement $j$, we construct a paraphrase validation task using Meta-Llama-3.1-8B-Instruct-Turbo with temperature 1.0, computing log-probabilities without generation.

We structure the prompt as follows: the system instruction establishes the paraphrasing task, the user message provides the issue topic and the participant's original opinion, and we place the candidate statement as a pre-filled assistant response:
\begin{align}
&\texttt{[System] You paraphrase people's views.}\nonumber\\
&\texttt{[User] Topic: \{issue\}}\nonumber\\
&\texttt{\ \ \ \ \ \ \ Original opinion: \{opinion\}}\nonumber\\
&\texttt{[Assistant] Paraphrase: \{statement\}}\nonumber
\end{align}

Given the assistant response following the ``\texttt{Paraphrase:}'' tokenized as $a_1, \ldots, a_M$, we compute the length-normalized log-probability score:
\[
s_{u,j} = \frac{1}{M} \sum_{k=1}^{M} \log p_\theta(a_k \mid \text{context}, a_{1:k-1})
\]

Intuitively, this score measures how well the statement represents the participant's opinion. For validation, we compute Spearman's correlation between each participant's five model-assigned scores $\{s_{u,j}\}_{j=1}^5$ and their corresponding Likert ratings for the same statements.

\paragraph{Length manipulation.} We repeat the procedure after truncating each participants opinion to the last fraction $f\in\{1.00,0.75,0.50,0.25\}$ of its words.  

\paragraph{Results.}The correlation between the average likelihood of the statement according to the user prompted policy and the user's rating increases with the fraction of the user's opinion that is conditioned on (Fig.~\ref{fig:likelihood_likert_corr}). Thus, conditioning on more of the opinion yields better predictions of human ratings. We note that with the users' full opinions the correlation is still somewhat low, but the positive trend observed suggests that eliciting more informative statements could lead to a more accurate signal. 

\begin{table*}[tbp]
\centering
\small
\caption{Credit assignment results for \texttt{Llama 3.1 8B Instruction-Tuned}. Darker green indicates larger Z-score. Z-score column is for altered tokens. Alterations are represented by "<misaligned>/<aligned>".}
\label{tab:policy_zscore_comparison}
\newcolumntype{P}[1]{>{\raggedright\arraybackslash}p{#1\textwidth}}
\begin{tabular}{P{0.2} P{0.2} P{0.37} l}
\toprule
\textbf{Reference policy prompt} & \textbf{User policy prompt} & \textbf{Sequence} & \textbf{Z-Score} \\
\midrule
User food profile: empty & User food profile: vegetarian & \tokenhl{2}{I} \tokenhl{9}{ am} \tokenhl{9}{ having} \tokenhl{61}{ chicken/tofu} \tokenhl{10}{ ench}\tokenhl{12}{il}\tokenhl{11}{adas} \tokenhl{1}{ tonight}\tokenhl{8}{.} \tokenhl{5}{ Then} \tokenhl{7}{ I} \tokenhl{70}{ am} \tokenhl{9}{ going} \tokenhl{10}{ to} \tokenhl{7}{ meet} \tokenhl{11}{ up} \tokenhl{12}{ with} \tokenhl{9}{ some} \tokenhl{12}{ friends}\tokenhl{9}{.} & 2.69 \\
\midrule
User location profile: empty &User location profile: lives in a cold climate & \tokenhl{10}{I}
\tokenhl{10}{'m}
\tokenhl{11}{ going}
\tokenhl{11}{ to}
\tokenhl{7}{ the}
\tokenhl{37}{ beach/mountains}
\tokenhl{7}{ this}
\tokenhl{11}{ weekend}
\tokenhl{11}{ to}
\tokenhl{70}{ surf/ski}
\tokenhl{9}{.}
\tokenhl{16}{ I}
\tokenhl{0}{ need}
\tokenhl{11}{ to}
\tokenhl{18}{ buy}
\tokenhl{11}{ some}
\tokenhl{11}{ new}
\tokenhl{3}{ clothes}
\tokenhl{8}{.} & 1.78, 3.31 \\
\midrule
User time profile: empty & User time profile: morning &
\tokenhl{9}{I}
\tokenhl{7}{ am}
\tokenhl{5}{ about}
\tokenhl{8}{ to}
\tokenhl{7}{ eat}
\tokenhl{7}{ some}
\tokenhl{7}{ food}
\tokenhl{8}{.}
\tokenhl{2}{ I}
\tokenhl{2}{ am}
\tokenhl{4}{ going}
\tokenhl{7}{ to}
\tokenhl{0}{ have}
\tokenhl{70}{ spaghetti/pancakes}
\tokenhl{3}{.}
\tokenhl{9}{ I}
\tokenhl{21}{ will}
\tokenhl{7}{ use}
\tokenhl{2}{ my}
\tokenhl{8}{ phone}
\tokenhl{8}{ to}
\tokenhl{8}{ order}
\tokenhl{6}{ it}
\tokenhl{6}{.} & 4.26 \\
\midrule
User opinion: empty & User opinion: Favors stricter gun control laws. &
\tokenhl{16}{Implement}
\tokenhl{14}{ing}
\tokenhl{5}{ background}
\tokenhl{9}{ checks}
\tokenhl{4}{ that}
\tokenhl{14}{ are}
\tokenhl{44}{ less/more}
\tokenhl{14}{ strict}
\tokenhl{17}{ for}
\tokenhl{16}{ gun}
\tokenhl{9}{ purchases}
\tokenhl{15}{ is}
\tokenhl{70}{ essential}
\tokenhl{5}{.}
\tokenhl{10}{ Also}
\tokenhl{14}{,}
\tokenhl{2}{ my}
\tokenhl{8}{ favorite}
\tokenhl{3}{ color}
\tokenhl{17}{ is}
\tokenhl{16}{ orange}
\tokenhl{12}{.}
& 2.09 \\
\bottomrule
\end{tabular}
\end{table*}

\subsection{Evaluating Credit Assignment}
\label{sec:exp_credit_assignment}

\paragraph{Setup.}
Our framework defines token-level rewards as
$r_i(s,a) = \beta \log \pi_i(a \mid s)$,
where each policy $\pi_i$ is obtained by prompting a base LLM with agent-specific information. Reward-guided search is effective when policies exhibit \emph{localized credit assignment}: changes in $\pi_i(a \mid s)$ concentrate on tokens tied to the prompt information. \citet{rafailov2024from} observed this behavior in DPO-trained models; here we test it for policies induced by prompting instruction-tuned models.

\paragraph{Test design.}
We use Llama 3.1 8B Instruction-Tuned. For each test, we compare token log-probabilities under a \textbf{user policy prompt} (e.g., ``User time profile: morning'') and a \textbf{reference policy prompt} (e.g., ``User time profile: empty''). We evaluate two nearly identical sequences: $X_1$ contains a concept that conflicts with the user profile (e.g., ``spaghetti'' for a ``morning'' profile) and $X_2$ replaces it with an aligned concept (e.g., ``pancakes'').

\paragraph{Metric.}
For each token $a_j$ with prefix $s$, we compute the log-likelihood difference between the user and reference policies,
\[
\Delta L(a_j \mid s) = \log \pi_U(a_j \mid s) - \log \pi_R(a_j \mid s).
\]
We then measure the change in this difference when switching from $X_1$ to $X_2$,
\[
D_j = \bigl|\, \Delta L_{X_1}(a_j \mid s) - \Delta L_{X_2}(a_j \mid s) \,\bigr|.
\]
A large $D_j$ indicates that the user profile alters the model’s preference for token $a_j$ specifically when the conflicting concept is swapped for an aligned concept. To compare across positions, we convert $\{D_j\}$ to Z-scores using the mean and standard deviation over all tokens in the sequence.

\paragraph{Findings.}
The largest Z-scores occur at the tokens that differ between $X_1$ and $X_2$ (Table~\ref{tab:credit_assignment}). With a ``morning'' time profile, the ``spaghetti'' versus ``pancakes'' position shows the highest shift. Additional examples with Llama and \texttt{Gemma 2 9B Instruction-Tuned} in \autoref{app:credit_assignment} show the same pattern.

\paragraph{Implication.}
System prompting induces localized credit assignment in instruction-tuned models. This supports using $r_i(s,a) = \beta \log \pi_i(a \mid s)$ from prompted policies as a targeted signal for token-level search.

\subsection{Consensus Generation}
\label{sec:consensus_generation_exp}

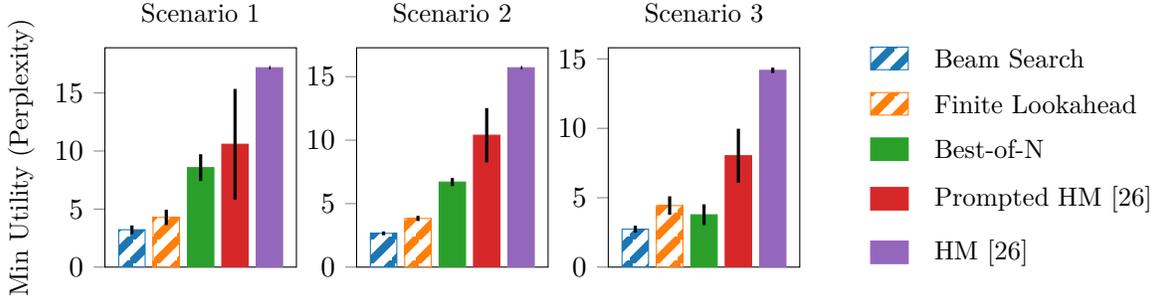
\begin{figure*}[t]
\centering
\begin{tikzpicture}

% hatch used for Beam/Finite Lookahead
\pgfdeclarepatternformonly{north east lines wide}%
   {\pgfqpoint{-0pt}{-0pt}}{\pgfqpoint{10pt}{10pt}}{\pgfqpoint{9pt}{9pt}}{
     \pgfsetlinewidth{3pt}
     \pgfpathmoveto{\pgfqpoint{0pt}{0pt}}
     \pgfpathlineto{\pgfqpoint{10pt}{10pt}}
     \pgfsetrectcap
     \pgfusepath{stroke}
   }

\begin{groupplot}[
    group style={group size=3 by 1, horizontal sep=8mm, vertical sep=10mm},
    width=0.25\textwidth,   % keep total width (with legend) within page
    height=4.5cm,
    ymin=0,
    ybar,
    tick align=outside,
    tick pos=left,
    xtick=\empty,
    ylabel style={font=\small},
    title style={font=\small},
    ybar,
    error bars/y dir=both,
    error bars/y explicit,
    error bars/error mark=none,
    % /pgfplots/bar width=6pt,          % narrower bars
    enlarge x limits=0.55   % avoid clipping shifted bars
]

% ---------- Scenario 1 ----------
\nextgroupplot[title={Scenario 1}, ylabel={Min Utility (Perplexity)}]
\addplot[pattern=north east lines wide, pattern color=tabblue,   draw=tabblue,   bar shift=-26pt]
  coordinates {(1, 3.220311375644632) +- (0, 0.37173748246240773)};
\addplot[pattern=north east lines wide, pattern color=taborange, draw=taborange, bar shift=-13pt]
  coordinates {(1, 4.2852422593667)   +- (0, 0.659336113190132)};
\addplot[fill=tabgreen,  draw=tabgreen,  bar shift=0pt]
  coordinates {(1, 8.569835380968636) +- (0, 1.153808013835064)};
\addplot[fill=tabred,    draw=tabred,    bar shift=13pt]
  coordinates {(1, 10.584603800793127) +- (0, 4.76366408699182)};
\addplot[fill=tabpurple, draw=tabpurple, bar shift=26pt]
  coordinates {(1, 17.175444078474424) +- (0, 0.0)};

% ---------- Scenario 2 ----------
\nextgroupplot[title={Scenario 2}]
\addplot[pattern=north east lines wide, pattern color=tabblue,   draw=tabblue,   bar shift=-26pt]
  coordinates {(1, 2.660412049853852) +- (0, 0.14652962793459665)};
\addplot[pattern=north east lines wide, pattern color=taborange, draw=taborange, bar shift=-13pt]
  coordinates {(1, 3.83392221672239)   +- (0, 0.20070975623509832)};
\addplot[fill=tabgreen,  draw=tabgreen,  bar shift=0pt]
  coordinates {(1, 6.698984940067551) +- (0, 0.3183660189792501)};
\addplot[fill=tabred,    draw=tabred,    bar shift=13pt]
  coordinates {(1, 10.387804930225881) +- (0, 2.1369448492699448)};
\addplot[fill=tabpurple, draw=tabpurple, bar shift=26pt]
  coordinates {(1, 15.70550310440665) +- (0, 0.0)};

% ---------- Scenario 3 ----------
\nextgroupplot[title={Scenario 3}]
\addplot[pattern=north east lines wide, pattern color=tabblue,   draw=tabblue,   bar shift=-26pt]
  coordinates {(1, 2.7350514770216763) +- (0, 0.25172561558067)};
\addplot[pattern=north east lines wide, pattern color=taborange, draw=taborange, bar shift=-13pt]
  coordinates {(1, 4.434410379363428)  +- (0, 0.6689645736468379)};
\addplot[fill=tabgreen,  draw=tabgreen,  bar shift=0pt]
  coordinates {(1, 3.772227586372919)  +- (0, 0.7508844755621441)};
\addplot[fill=tabred,    draw=tabred,    bar shift=13pt]
  coordinates {(1, 8.02593037282243)   +- (0, 1.9410838204334466)};
\addplot[fill=tabpurple, draw=tabpurple, bar shift=26pt]
  coordinates {(1, 14.175891992587914) +- (0, 0.18954220124329008)};

\end{groupplot}

% ---------- Legend to the right; whole block centered ----------
\path (group c3r1.east) ++(8mm,0) node (legendanchor) {};
\matrix[
  matrix of nodes,
  nodes={anchor=west},
  column sep=6pt,
  row sep=3pt,
  font=\small
] at (legendanchor) [anchor=west] {
  \node[minimum width=5mm, minimum height=3mm,
        pattern=north east lines wide, pattern color=tabblue, draw=tabblue] {}; & \node {Beam Search}; \\
  \node[minimum width=5mm, minimum height=3mm,
        pattern=north east lines wide, pattern color=taborange, draw=taborange] {}; & \node {Finite Lookahead}; \\
  \node[minimum width=5mm, minimum height=3mm, fill=tabgreen,  draw=tabgreen]  {}; & \node {Best-of-N}; \\
  \node[minimum width=5mm, minimum height=3mm, fill=tabred,    draw=tabred]    {}; & \node {Prompted HM~\cite{tessler2024ai}}; \\
  \node[minimum width=5mm, minimum height=3mm, fill=tabpurple, draw=tabpurple] {}; & \node {HM~\citep{tessler2024ai}}; \\
};

\end{tikzpicture}
\caption{Per-scenario egalitarian welfare (perplexity). Lower values indicate better minimum agent utility. Striped bars indicate that the method uses search over the token-level MDP. Numerical results are reported in Table~\ref{tab:eppl_results} in \autoref{app:additional_results}.}
\label{fig:per_scenario_perplexity}
\end{figure*}

We evaluated different approaches for generating a single consensus statement by comparing our proposed search algorithms against several baselines. The primary goal was to assess how well each method optimizes egalitarian welfare (EW), measured by a perplexity-based metric reflecting worst-case agent alignment. More detailed consensus generation experiments, with \texttt{Gemma 2 9b Instruction-Tuned}, an LLM-judge metric, and with more agents are presented in \autoref{app:additional_consensus_experiments} of \autoref{app:additional_results}.

\paragraph{Scenarios:} We used scenarios from the Habermas Machine dataset \citep{tessler2024ai}. To obtain distinct settings, scenario descriptions were embedded using \texttt{BAAI/bge-large-en-v1.5} \citep{bge_embedding} and clustered via $k$-means ($k=3$). Representative scenarios were selected from each cluster (Scenarios 1, 2, and 3). The issues for these scenarios are in the captions of Tables~\ref{tab:democracy_summary}, \ref{tab:UK_summary}, and \ref{tab:vegetarian_summary}.

\paragraph{Agents and Policies:} For each scenario, agent opinions were taken from the dataset. Agent policies $\pi_i$ were instantiated by prompting \texttt{Llama 3.1 8B Instruct}~\citep{grattafiori2024llama} with the issue and agent $i$'s opinion, instructing it to generate text aligned with that viewpoint (prompt shown in \autoref{fig:agent_policy_prompt} in \autoref{app:prompts}). The resulting likelihoods $\pi_i(s,a)$ represent agent $i$'s preferences. Agent opinions are detailed in Tables~\ref{tab:democracy_summary}, \ref{tab:UK_summary}, and \ref{tab:vegetarian_summary} in \autoref{app:sample_statements} of \autoref{app:additional_results}.

\paragraph{Base Generation Model:} Consensus statements were generated using \texttt{Llama 3.1 8B Instruct}, prompted with the issue and all agent opinions (prompt shown in \autoref{fig:ref_policy_prompt} in \autoref{app:prompts}).

\paragraph{Evaluation Metric - Egalitarian Perplexity (EPPL):} To capture alignment with the least satisfied agent for a consensus statement $X$, we define Egalitarian Perplexity. For each agent $i$, their specific perplexity $PPL_i(X)$ is found by prompting \texttt{Llama 3.1 8B Instruct} with the issue and agent $i$'s opinion to generate a statement perfectly reflecting that opinion. The average log-likelihood of the actual consensus statement $X = (a_1, \dots, a_L)$ conditioned on this agent-specific prompt is:
\[
\bar{L}_i(X) = \frac{1}{L} \sum_{t=1}^{L} \log \pi_i(s_{t-1}, a_t | \text{prompt}_i).
\]
The agent-specific perplexity is $PPL_i(X) = \exp(-\bar{L}_i(X))$. The final Egalitarian Perplexity for $X$ is $EPPL(X) = \max_{i \in N} PPL_i(X)$. Lower EPPL indicates better egalitarian welfare.

\paragraph{Seeds:} We report the mean and standard deviation of EPPL over 3 seeds per method and scenario.

\paragraph{Methods Compared:} We compared our proposed algorithms, \textbf{Finite Lookahead} (Algorithm~\ref{alg:finite-lookahead}, depth $d=4$, branching $B=2$), and \textbf{Beam Search} (Algorithm~\ref{alg:beam-search-egalitarian}, width $w=4$, pruning based on partial EPPL), against three baselines: \textbf{Best-of-N} (selecting the best of $N=4$ samples\footnote{$N=4$ was chosen to align with the Prompted Habermas Machine.} from the base model by EPPL); a \textbf{Prompted Habermas Machine}\footnote{As implemented in \url{https://github.com/google-deepmind/habermas_machine}} (1 critique round, 4 candidates\footnote{We chose four candidates to align with the default parameters in the Prompted Habermas Machine example in the Habermas Machine GitHub repository.}, critiques from base model conditioned on agent opinions); and the original \textbf{Habermas Machine} (HM)~\citep{tessler2024ai} consensus (generated by a fine-tuned Chinchilla-70B).

\subsubsection{Results}
\label{sssec:results_consensus_concise}
\autoref{fig:per_scenario_perplexity} summarizess EPPL performance (lower is better). \textbf{Beam Search consistently achieved the lowest EPPL} (average: 2.87), indicating high alignment with the least satisfied agent. \textbf{Finite Lookahead also performed well} (average EPPL: 4.18), outperforming baseline methods. Both search methods surpassed \textbf{Best-of-N} (6.35) and the \textbf{Prompted Habermas Machine} (9.67). The \textbf{Habermas Machine} baseline had the highest EPPL (15.69), possibly because its statement was generated by a different model (Chinchilla 70B).

The results suggest that token-level search guided by EPPL, as in Beam Search and Finite Lookahead, effectively generates consensus statements with better minimum agent alignment compared to sampling or iterative refinement. Consensus statements for the first seed are in Tables~\ref{tab:democracy_summary}, \ref{tab:UK_summary}, and \ref{tab:vegetarian_summary}. The strong empirical performance of methods operating on the token-level MDP complements the fact that these methods are also more amenable to theoretical analysis and fairness guarantees.

\section{Discussion}
\label{sec:discussion}

This work introduced a framework for generating consensus statements by modeling the process as a multi-objective, token-level MDP with rewards derived from agent-specific language model policies. Our aim was to connect LLM-based text generation with the formal fairness guarantees of social choice theory via this MDP.

Our theoretical contributions for stochastic outcomes (lotteries over statements) focused on the core. By maximizing Nash Welfare over expected probability-based utilities, we identified an optimal lottery $p^*$ that induces a stochastic generation policy $\pi^*$ (Definition~\ref{def:induced_policy}) inheriting the ex-ante core property (Corollary~\ref{cor:policy-core}). Chunking was introduced as a heuristic to manage the search space. For deterministic outcomes (single statements), we focused on maximizing egalitarian welfare (EW), proposing finite lookahead and beam search as approximation algorithms.

Empirical results validated several aspects of our framework. We found that token likelihood from prompted policies meaningfully correlate with human preferences. Credit assignment experiments (\autoref{sec:exp_credit_assignment}) confirmed that prompting LLMs with agent profiles creates policies that not only correlate with human preferences, but that also correctly assign credit to tokens. Consensus generation experiments (\autoref{sec:consensus_generation_exp}), using Egalitarian Perplexity (EPPL) to measure EW, showed that beam search and finite lookahead, guided by the EW objective, outperformed baselines like Best-of-N and an adapted Habermas Machine. Beam search yielded the lowest EPPL. 

Overall, formulating consensus generation as a search problem within a token-level MDP, guided by explicit social choice objectives like EW, is a promising direction. However, there are some important questions that future work should address. First, finding methods to train more faithful personalized policies for each agent is an important direction as it is upstream of many important challenges \citep{ijcai2025p1141}. Second, finding a way to approximate the core on the unchunked space without looking at the whole tree is an important step to work towards. Previous work has looked at approximating the core \citep{ebadian2024optimized, gemp2024approximating}, but neither method directly applies to our setting. And third, future work should focus on theoretical guarantees for the single statement case, which we did not obtain. 

Lastly, we note that until our methods are better understood, the outputs of our algorithm should be treated as artifacts for collective sense-making instead of binding decisions, as suggested by \citet{revel2025ai}. For example, instead of treating the output as a decision, the output could be treated as another input to the discussion that participants of the collective decision could reflect on. Optimistically, one would hope that these consensus statements could identify previously unknown points of agreement or solutions that no one had previously thought of that are in fact highly agreeable. In sum, this work contributes theoretical foundations and practical algorithms for incorporating social choice principles into generative AI for collective decision-making and sense-making.

\bibliographystyle{ACM-Reference-Format} 
\bibliography{sample}

%%%%%%%%%%%%%%%%%%%%%%%%%%%%%%%%%%%%%%%%%%%%%%%%%%%%%%%%%%%%%%%%%%%%%%%%

\appendix
\onecolumn
\clearpage 

\section*{Table of Contents for Appendices}
\begingroup
\setlength{\parindent}{0pt}
\setlength{\parskip}{0.5ex}

{\textbf{A. Additional Related Work}} \dotfill \pageref{app:additional_related_work} \\
{\textbf{B. Illustration of deriving the policy from the lottery}} \dotfill \pageref{app:illustration} \\
{\textbf{C. Deferred Proofs}} \dotfill \pageref{app:proof} \\
{\textbf{D. Search Algorithms}} \dotfill \pageref{app:search_algos} \\
{\textbf{E. Additional Empirical Results}} \dotfill \pageref{app:additional_results}\\
{\textbf{F. Prompts}} \dotfill \pageref{app:prompts} \\

\endgroup
\clearpage

\section{Additional Related Work}
\label{app:additional_related_work}

\paragraph{Mechanism Design for LLMs.} This nascent area explores mechanisms for settings where multiple agents interact via LLMs. \citet{duetting2024mechanism} design token-level auctions where bids influence generated distributions, analyzing incentive compatibility. \citet{dubey2024auctions} design auctions for incorporating ads into LLM summaries, using ``prominence'' as an intermediate allocation variable. Common ground exists in the high-level goal of aggregating inputs from multiple agents (represented algorithmically/via LLMs) to produce a collective textual output. However, our work differs significantly in methodology and objective. We do not employ economic mechanisms like auctions, bids, or payments. Instead, we formulate the aggregation problem as a multi-objective optimization within an MDP, aiming to achieve a fair consensus based on social choice criteria, rather than allocating influence or generating content based on bids.

\section{Illustration of Deriving the Policy from the Lottery}
\label{app:illustration}

The relationship between the target distribution $p^*$ over leaves and the calculation of the policy $\Pi^*$ at an internal node $s$ is illustrated in Figure~\ref{fig:induced_policy_diagram}.

\begin{figure*}[h]
\centering
\begin{tikzpicture}[
    level distance=2.4cm,
    level 1/.style={sibling distance=4.5cm},
    level 2/.style={sibling distance=2.2cm},
    edge from parent/.style={draw, -latex},
    every node/.style={rectangle, draw, rounded corners, font=\scriptsize, align=center, minimum height=0.6cm},
    leaf/.style={rectangle, draw, font=\tiny, align=center, minimum height=0.5cm},
    prob/.style={font=\tiny, draw=none, midway, sloped, above}
]
\node (s) {$s$} [->]
    child { node (sa) {$s\|k_1$}
        child { node[leaf] (leaf1) {$X_1 \in \mathcal{C}_{\mathcal{K}}(s, k_1)$ \\ $p^*(X_1)=0.2$} edge from parent node[prob] {}}
        child { node[leaf] (leaf2) {$X_2 \in \mathcal{C}_{\mathcal{K}}(s, k_1)$ \\ $p^*(X_2)=0.05$} edge from parent node[prob] {}}
        edge from parent node[left, draw=none] (edge1) {$k_1$}
    }
    child { node (sb) {$s\|k_2$}
        child { node[leaf] (leaf3) {$X_3 \in \mathcal{C}_{\mathcal{K}}(s, k_2)$ \\ $p^*(X_3)=0.1$} edge from parent node[prob] {}}
        child { node[leaf] (leaf4) {$X_4 \in \mathcal{C}_{\mathcal{K}}(s, k_2)$ \\ $p^*(X_4)=0.3$} edge from parent node[prob] {}}
        edge from parent node[right, draw=none] (edge2) {$k_2$}
    };

\node[draw=none, font=\scriptsize, align=left, above =0.1cm of s] (ps) {
    $\begin{aligned}
    P^*(\mathcal{C}_{\mathcal{K}}(s)) &= p^*(X_1)+p^*(X_2) +p^*(X_3)+p^*(X_4) \\
    &= 0.2+0.05+0.1+0.3 \\
    &= 0.65
    \end{aligned}$%
};
\node[draw=none, font=\scriptsize, align=left, below left =0.1cm and 0.5cm of sa] (psk1) {
    $\begin{aligned}
    &P^*(\mathcal{C}_{\mathcal{K}}(s, k_1)) \\
    &= p^*(X_1)+p^*(X_2) \\
    &= 0.2+0.05 = 0.25
    \end{aligned}$%
};
\node[draw=none, font=\scriptsize, align=left, below right=0.1cm and 0.5cm of sb] (psk2) {
    $\begin{aligned}
&P^*(\mathcal{C}_{\mathcal{K}}(s, k_2)) \\
&= p^*(X_3)+p^*(X_4) \\
&= 0.1+0.3 = 0.40
\end{aligned}$
};

\node[draw=none, font=\scriptsize, align=left, above left=-0.5cm and 0.0cm of edge1] (pi1) {
    $\begin{aligned}
    \Pi^*(s, k_1) &= \frac{P^*(\mathcal{C}_{\mathcal{K}}(s, k_1))}{P^*(\mathcal{C}_{\mathcal{K}}(s))} \\
    &= \frac{0.25}{0.65} \approx 0.38
    \end{aligned}$%
};
\node[draw=none, font=\scriptsize, align=left, above right=-0.5cm and -0.4cm of edge2] (pi2) {
    $\begin{aligned}
    \Pi^*(s, k_2) &= \frac{P^*(\mathcal{C}_{\mathcal{K}}(s, k_2))}{P^*(\mathcal{C}_{\mathcal{K}}(s))} \\
    &= \frac{0.40}{0.65} \approx 0.62
    \end{aligned}$%
};

\end{tikzpicture}
\caption{Illustration of the induced stochastic policy $\Pi^*$ at state $s$. The optimal lottery $p^*$ assigns probabilities to the leaf nodes (complete paths). The probability $P^*(\mathcal{C}_{\mathcal{K}}(s))$ is the sum of $p^*(X)$ for all leaves reachable from $s$. The probability $P^*(\mathcal{C}_{\mathcal{K}}(s, k))$ is the sum for leaves reachable via action $k$. The policy $\Pi^*(s, k)$ is the conditional probability of taking action $k$.}
\label{fig:induced_policy_diagram}
\end{figure*}
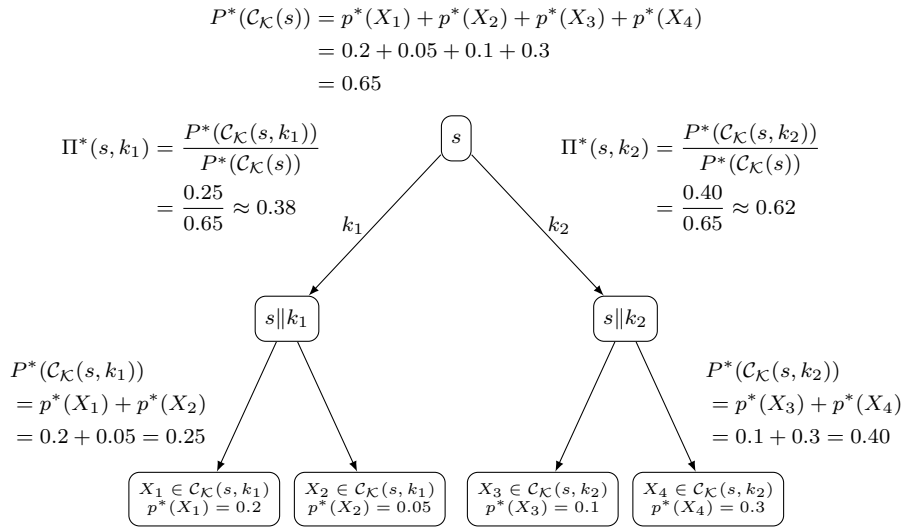

\section{Deferred Proofs}
\subsection{Proof of Theorem 1}
\label{app:proof}
\begin{proof}
We prove by induction on the depth of state $s$ in the chunked tree that the probability of reaching state $s$ under policy $\Pi^*$, denoted $P_{\Pi^*}(s)$, is equal to $P^*(\mathcal{C}_{\mathcal{K}}(s))$, the total mass assigned by $p^*$ to leaves passing through $s$.

\textbf{Base Case (Depth 0):} The initial state is $s_0$. $P_{\Pi^*}(s_0) = 1$ by definition. Also, $\mathcal{C}_{\mathcal{K}}(s_0) = \mathcal{C}_{\mathcal{K}}$ (all paths pass through the start state), and $P^*(\mathcal{C}_{\mathcal{K}}(s_0)) = \sum_{X \in \mathcal{C}_{\mathcal{K}}} p^*(X) = 1$ since $p^*$ is a probability distribution. Thus, $P_{\Pi^*}(s_0) = P^*(\mathcal{C}_{\mathcal{K}}(s_0))$.

\textbf{Inductive Hypothesis (IH):} Assume that for all states $s$ at depth $d$, $P_{\Pi^*}(s) = P^*(\mathcal{C}_{\mathcal{K}}(s))$.

\textbf{Inductive Step:} Consider an arbitrary state $s'$ at depth $d+1$. State $s'$ must be reached from a unique predecessor state $s$ at depth $d$ by taking a specific action (chunk) $k$, such that $s' = s\|k$. The probability of reaching $s'$ under $\Pi^*$ is:
\begin{align*}
P_{\Pi^*}(s') &= P_{\Pi^*}(s) \cdot \Pi^*(s,k) \\
&= P^*(\mathcal{C}_{\mathcal{K}}(s)) \cdot \Pi^*(s,k) \quad \text{(by IH)}
\end{align*}
If $P^*(\mathcal{C}_{\mathcal{K}}(s)) = 0$, then $P_{\Pi^*}(s) = 0$, implying $P_{\Pi^*}(s') = 0$. Also, if $P^*(\mathcal{C}_{\mathcal{K}}(s)) = 0$, then $P^*(\mathcal{C}_{\mathcal{K}}(s, k)) = 0$ since $\mathcal{C}_{\mathcal{K}}(s, k) \subseteq \mathcal{C}_{\mathcal{K}}(s)$. Since $s' = s\|k$, $\mathcal{C}_{\mathcal{K}}(s') = \mathcal{C}_{\mathcal{K}}(s, k)$, so $P^*(\mathcal{C}_{\mathcal{K}}(s')) = 0$. Thus, $P_{\Pi^*}(s') = P^*(\mathcal{C}_{\mathcal{K}}(s')) = 0$.

If $P^*(\mathcal{C}_{\mathcal{K}}(s)) > 0$, we use the definition of $\Pi^*(s,k)$:
\begin{align*}
P_{\Pi^*}(s') &= P^*(\mathcal{C}_{\mathcal{K}}(s)) \cdot \frac{P^*(\mathcal{C}_{\mathcal{K}}(s, k))}{P^*(\mathcal{C}_{\mathcal{K}}(s))} \\
&= P^*(\mathcal{C}_{\mathcal{K}}(s, k))
\end{align*}
Since $s' = s\|k$, we have $\mathcal{C}_{\mathcal{K}}(s') = \mathcal{C}_{\mathcal{K}}(s, k)$. Therefore,
\[
P_{\Pi^*}(s') = P^*(\mathcal{C}_{\mathcal{K}}(s'))
\]
This completes the inductive step.

\textbf{Conclusion:} The induction holds for all states $s$. Now, consider any leaf node $X \in \mathcal{C}_{\mathcal{K}}$. A leaf node is a state at the maximum depth. The set of paths passing through leaf $X$ is just the singleton set $\{X\}$, so $\mathcal{C}_{\mathcal{K}}(X) = \{X\}$. Applying our proven result for state $s=X$:
\[
P_{\Pi^*}(X) = P^*(\mathcal{C}_{\mathcal{K}}(X)) = P^*(\{X\}) = p^*(X)
\]
Since this holds for all $X \in \mathcal{C}_{\mathcal{K}}$, the distribution $p_{\Pi^*}$ induced by policy $\Pi^*$ is identical to the target distribution $p^*$. %
\end{proof}

\subsection{Proof of Theorem 2}

\begin{proof}
Fix $R>1$. Choose parameters $\varepsilon\in(0,1)$ and $\delta\in\bigl(0,(1-\varepsilon)^2/R\bigr)$. Consider two agents $i\in\{1,2\}$ and two leaves $x^\star$ and $x^{-}$. Let $C=\{x^\star,x^{-}\}$. Define utilities
\[
u_i(x^\star)=1-\varepsilon,\qquad u_i(x^{-})=\sqrt{\delta}\quad\text{for }i=1,2.
\]
Assume chunking removes $x^\star$ and keeps $x^{-}$, so $K=\{x^{-}\}$.

First, optimize over $C$. For any lottery $p\in\Delta(C)$ let $\alpha=p(x^\star)\in[0,1]$. Each agent’s expected utility equals
\[
\sum_{x\in C}u_i(x)p(x)=\alpha(1-\varepsilon)+(1-\alpha)\sqrt{\delta}\le 1-\varepsilon,
\]
since $\sqrt{\delta}<1-\varepsilon$. Hence
\[
\mathrm{NW}_C(p)=\Big(\alpha(1-\varepsilon)+(1-\alpha)\sqrt{\delta}\Big)^2\le (1-\varepsilon)^2,
\]
with equality only at $\alpha=1$. Therefore $p_C^{\ast}=\mathbf e_{x^\star}$ and $\mathrm{NW}_C(p_C^{\ast})=(1-\varepsilon)^2$.

Second, optimize over $K$. Any $p\in\Delta(K)$ satisfies
\[
\sum_{x\in K}u_i(x)p(x)\le \sqrt{\delta}\quad\text{for }i=1,2,
\]
so $\mathrm{NW}_K(p)\le\delta$. With $K=\{x^{-}\}$ we have $p_K^* = \mathbf e_{x^{-}}$ and $\mathrm{NW}_K(p_K^*)=\delta$.

Combining the two bounds gives
\[
\frac{\mathrm{NW}_C(p_C^{*})}{\mathrm{NW}_K(p_K^{*})}
\;\ge\; \frac{(1-\varepsilon)^2}{\delta}\;>\;R,
\]
by the choice of $\delta$. This establishes the claim.
\end{proof}

\clearpage
\section{Search Algorithms}
\label{app:search_algos}

\begin{algorithm}[h]
\caption{Finite Lookahead Egalitarian Welfare Maximization}
\label{alg:finite-lookahead}
\begin{algorithmic}[1]
\Require Set of agents $N$, Lookahead depth $d$, Branching factor $B$, Max length $L_{\max}$
\State Initialize current state $s_0$ to the empty sequence; $t \leftarrow 0$
\State Initialize generated path $X_{fl} \leftarrow (s_0)$
\While{$t < L_{\max}$ and $s_t$ is not terminal}
    \State Let $X_{prefix}$ be the path corresponding to $s_t$.
    \State Let $\mathcal{P}_d(s_t)$ be the set of all paths $P=(a_1, \dots, a_k)$ starting from $s_t$ such that $k \le d$ and $X_{prefix} \| P$ does not exceed length $L_{\max}$.
    \State Find a path $P^* = (a^*_1, \dots, a^*_{k^*}) \in \mathcal{P}_d(s_t)$ that maximizes the lookahead objective:
            \[ \max_{P \in \mathcal{P}_d(s_t)} \min_{i \in N} U_i^{\log}(X_{prefix} \| P) \]
    \If{no path $P^*$ found (e.g., $s_t$ is terminal)}
        \State Break
    \EndIf
    \State Take the first action $a^* \leftarrow a^*_1$.
    \State Update state: $s_{t+1} \leftarrow T(s_t, a^*)$
    \State Append $a^*$ to the sequence represented by $X_{fl}$.
    \State $t \leftarrow t + 1$
\EndWhile
\State Perform brush up on $X_{fl}$ (using prompt defined in \autoref{fig:brush_up_prompt}).
\State \Return Complete path $X_{fl}$
\end{algorithmic}
\end{algorithm}

\begin{algorithm}[h]
\caption{Egalitarian Welfare Beam Search}
\label{alg:beam-search-egalitarian}
\begin{algorithmic}[1]
\Require Set of agents $N$, Beam width $w$, Branching factor $B$, Max length $L_{\max}$
\State Initialize beam $\mathcal{B}_0 = \{ (s_0, \text{path } s_0) \}$ with the empty sequence path
\For{$t = 0$ to $L_{\max}-1$}
    \State $\mathcal{C}_{t+1} \leftarrow \emptyset$
    \For{each path $X_{path}$ represented by state sequence $(s_0, \dots, s_t)$ in $\mathcal{B}_t$}
        \If{$s_t$ is not terminal}
            \State Consider $B$ possible next actions $A_B(s_t)$ from state $s_t$
            \For{each action $a \in A_B(s_t)$}
                \State $s_{t+1} \leftarrow T(s_t, a)$
                \State $X_{new\_path} \leftarrow X_{path} \| a$
                \State Add $X_{new\_path}$ (represented by its state sequence) to $\mathcal{C}_{t+1}$
            \EndFor
        \Else
             \State Add $X_{path}$ (already terminal) to $\mathcal{C}_{t+1}$
        \EndIf
    \EndFor
    \State For each path $X \in \mathcal{C}_{t+1}$, compute its score $f(X) = \min_{i \in N} U_i^{\log}(X)$.
    \State $\mathcal{B}_{t+1} \leftarrow$ top $w$ paths from $\mathcal{C}_{t+1}$ according to score $f(X)$.
\EndFor
\State Select path from final beam $\mathcal{B}_{L_{\max}}$ with the highest score $f(X)$.
\State Perform brush up on selected path (using prompt defined in \autoref{fig:brush_up_prompt}).
\State \Return Brushed up statement.
\end{algorithmic}
\end{algorithm}

\clearpage
\section{Additional Empirical Results}
\label{app:additional_results}

\subsection{Credit Assignment}
\label{app:credit_assignment}

\begin{table*}[htbp]
\centering
\small
\caption{Credit assignment results for \texttt{Gemma 2 9b Instruction-Tuned}. Darker green indicates larger Z-score. Z-score column is for altered tokens. Alterations are represented by "<misaligned>/<aligned>".}
\label{tab:gemma_policy_zscore_comparison}
\newcolumntype{P}[1]{>{\raggedright\arraybackslash}p{#1\textwidth}}
\begin{tabular}{P{0.23} P{0.23} P{0.34} l}
\toprule
\textbf{Reference policy prompt} & \textbf{User policy prompt} & \textbf{Sequence} & \textbf{Z-Score} \\
\midrule
User food profile: empty & User food profile: vegetarian &
\tokenhl{0}{I}
\tokenhl{0}{ am}
\tokenhl{0}{ having}
\tokenhl{70}{ chicken/tofu}
\tokenhl{6}{ en}
\tokenhl{11}{chila}
\tokenhl{11}{das}
\tokenhl{11}{ tonight}
\tokenhl{0}{.}
\tokenhl{8}{ Then}
\tokenhl{5}{ I}
\tokenhl{0}{ am}
\tokenhl{2}{ going}
\tokenhl{9}{ to}
\tokenhl{4}{ meet}
\tokenhl{10}{ up}
\tokenhl{11}{ with}
\tokenhl{9}{ some}
\tokenhl{11}{ friends}
\tokenhl{3}{.}
& 3.69 \\
\midrule
User location profile: empty &User location profile: lives in a cold climate & \tokenhl{0}{I}
\tokenhl{0}{'}
\tokenhl{0}{m}
\tokenhl{0}{ going}
\tokenhl{0}{ to}
\tokenhl{0}{ the}
\tokenhl{24}{ beach/mountains}
\tokenhl{2}{ this}
\tokenhl{17}{ weekend}
\tokenhl{13}{ to}
\tokenhl{70}{ surf/ski}
\tokenhl{16}{.}
\tokenhl{15}{ I}
\tokenhl{4}{ need}
\tokenhl{16}{ to}
\tokenhl{24}{ buy}
\tokenhl{17}{ some}
\tokenhl{16}{ new}
\tokenhl{10}{ clothes}
\tokenhl{11}{.} & 1.01, 2.86 \\
\midrule
User time profile: empty & User time profile: morning & 
\tokenhl{0}{I}
\tokenhl{0}{ am}
\tokenhl{0}{ about}
\tokenhl{0}{ to}
\tokenhl{0}{ eat}
\tokenhl{0}{ some}
\tokenhl{0}{ food}
\tokenhl{0}{.}
\tokenhl{0}{ I}
\tokenhl{0}{ am}
\tokenhl{0}{ going}
\tokenhl{0}{ to}
\tokenhl{0}{ have}
\tokenhl{70}{ spaghetti/pancakes}
\tokenhl{3}{.}
\tokenhl{9}{ I}
\tokenhl{7}{ will}
\tokenhl{7}{ use}
\tokenhl{5}{ my}
\tokenhl{1}{ phone}
\tokenhl{9}{ to}
\tokenhl{9}{ order}
\tokenhl{7}{ it}
\tokenhl{9}{.} & 3.13 \\
\midrule
User opinion: empty & User opinion: Favors stricter gun control laws. & 
\tokenhl{23}{Implementing}
\tokenhl{23}{ background}
\tokenhl{23}{ checks}
\tokenhl{23}{ that}
\tokenhl{23}{ are}
\tokenhl{56}{ less/more}
\tokenhl{70}{ strict}
\tokenhl{7}{ for}
\tokenhl{56}{ gun}
\tokenhl{6}{ purchases}
\tokenhl{2}{ is}
\tokenhl{21}{ essential}
\tokenhl{0}{.}
\tokenhl{6}{ Also}
\tokenhl{23}{,}
\tokenhl{7}{ my}
\tokenhl{12}{ favorite}
\tokenhl{69}{ color}
\tokenhl{23}{ is}
\tokenhl{8}{ orange}
\tokenhl{23}{.}
& 1.77 \\ 
\bottomrule
\end{tabular}
\end{table*}

\subsection{Additional Consensus Generation Experiments}
\label{app:additional_consensus_experiments}
We ran additional experiments on consensus generation. We restricted to scenarios with fewer than five agent opinions. We then applied \(k\)-means clustering (\(k=5\)) to scenario embeddings from \texttt{BAAI/bge-large-en-v1.5}. Consensus statements were generated with \texttt{Gemma 2 9B (instruction-tuned)}. We evaluated these statements using Egalitarian Perplexity (\(EPPL\); lower is better), computed both in-model (the generating model, \texttt{Gemma 2 9B Instruct}) and out-of-model (\texttt{Llama 3.1 8B Instruct}). We also used an LLM-judge metric with GPT-4.1 (prompt in \autoref{fig:llm_judge_prompt}). For each scenario, GPT-4.1 ranked candidates on behalf of each agent; we report the maximum (worst) rank across agents for each method (lower is better). All averages are over three random seeds.

\subsubsection{Question 1: Scaling—Habermas vs.\ Best-of-\(N\)}
\label{app:scaling_q1}
We vary the number of candidates for Habermas and \(N\) for Best-of-\(N\). \autoref{tab:gemma_eppl_scaling_comparison_q1_detail} and \autoref{tab:llama_eppl_scaling_comparison_q1_detail} report means and standard deviations of \(EPPL\) (lower is better).

\begin{table}[htbp]
\centering
\footnotesize
\begin{minipage}{.48\textwidth}
\centering
\caption{Gemma \(EPPL\) (lower is better): Habermas vs.\ Best-of-\(N\) scaling.}
\label{tab:gemma_eppl_scaling_comparison_q1_detail}
\begin{tabular}{@{}llrr@{}}
\toprule
Method & \(N\) & Mean & Std.\ Dev. \\
\midrule
Habermas Machine & 1 & 15.54 & 6.16 \\
Habermas Machine & 2 & 17.98 & 6.68 \\
Habermas Machine & 3 & 19.14 & 7.96 \\
Habermas Machine & 5 & 17.11 & 7.06 \\
Habermas Machine & 10 & 17.38 & 6.19 \\
Habermas Machine & 20 & 19.61 & 8.39 \\
Habermas Machine & 50 & 15.55 & 4.51 \\
\midrule
Best-of-\(N\) & 1 & 18.59 & 11.35 \\
Best-of-\(N\) & 3 & 11.99 & 5.99 \\
Best-of-\(N\) & 5 & 9.75 & 5.29 \\
Best-of-\(N\) & 10 & 9.03 & 4.03 \\
Best-of-\(N\) & 20 & 6.92 & 2.27 \\
Best-of-\(N\) & 50 & 7.43 & 2.91 \\
\bottomrule
\end{tabular}
\end{minipage}\hfill
\begin{minipage}{.48\textwidth}
\centering
\caption{Llama \(EPPL\) (lower is better): Habermas vs.\ Best-of-\(N\) scaling.}
\label{tab:llama_eppl_scaling_comparison_q1_detail}
\begin{tabular}{@{}llrr@{}}
\toprule
Method & \(N\) & Mean & Std.\ Dev. \\
\midrule
Habermas Machine & 1 & 12.20 & 7.39 \\
Habermas Machine & 2 & 12.05 & 2.87 \\
Habermas Machine & 3 & 13.69 & 2.38 \\
Habermas Machine & 5 & 12.48 & 4.80 \\
Habermas Machine & 10 & 11.73 & 2.56 \\
Habermas Machine & 20 & 13.37 & 2.29 \\
Habermas Machine & 50 & 12.82 & 3.81 \\
\midrule
Best-of-\(N\) & 1 & 11.92 & 4.08 \\
Best-of-\(N\) & 3 & 9.14 & 3.33 \\
Best-of-\(N\) & 5 & 7.67 & 2.43 \\
Best-of-\(N\) & 10 & 7.67 & 2.43 \\
Best-of-\(N\) & 20 & 7.32 & 2.12 \\
Best-of-\(N\) & 50 & 7.58 & 2.58 \\
\bottomrule
\end{tabular}
\end{minipage}
\end{table}

\paragraph{Findings for RQ1.}
Across all \(N\), Best-of-\(N\) yields lower \(EPPL\) than Habermas in both evaluations. Best-of-\(N\) improves as \(N\) increases, with marginal gains after \(N=20\). Habermas shows no clear improvement with more candidates. Under a fixed sample budget, Best-of-\(N\) reduces \(EPPL\) more effectively.

\subsubsection{Question 2: Beam Search Scaling}
\label{app:scaling_q2}
We examine how beam width affects performance.

\begin{table}[htbp]
\centering
\footnotesize
\caption{Beam width scaling (average over 5 scenarios, 3 seeds each). Lower is better for all metrics.}
\label{tab:beam_search_scaling_q2}
\begin{tabular}{@{}rrrrrrr@{}}
\toprule
Beam & \multicolumn{2}{c}{Gemma \(EPPL\)} & \multicolumn{2}{c}{Llama \(EPPL\)} & \multicolumn{2}{c}{LLM Judge Max Rank} \\
Width & Mean & Std.\ Dev. & Mean & Std.\ Dev. & Mean & Std.\ Dev. \\
\midrule
2 & 10.01 & 1.95 & 10.07 & 1.70 & 3.13 & 0.80 \\
4 & 7.84 & 1.33 & 10.16 & 1.99 & 2.80 & 0.61 \\
6 & 10.54 & 10.40 & 9.20 & 2.51 & 3.13 & 0.38 \\
8 & 12.56 & 7.47 & 12.69 & 5.90 & 3.93 & 0.15 \\
\bottomrule
\end{tabular}
\end{table}

\paragraph{Findings for RQ2.}
The optimal width depends on the metric. Width \(4\) is best on average for Gemma \(EPPL\) and the LLM-judge rank; width \(6\) is best for Llama \(EPPL\). Width \(8\) degrades performance, likely due to myopic pruning based on partial \(EPPL\) scores within the beam search.

\subsubsection{Question 3: Method Comparison}
\label{app:scaling_q3}
We compare Finite Lookahead (\(d=3\), \(B=3\)), Beam Search (width \(=4\)), Best-of-\(N\) (\(N\in\{5,10,50\}\)), and Habermas (candidates \(=5\)).

\begin{table}[htbp]
\centering
\footnotesize
\caption{Overall comparison: LLM-judge max rank (lower is better).}
\label{tab:method_comp_llm_judge_q3}
\begin{tabular}{@{}lrrrr@{}}
\toprule
Method & Max Rank & Std.\ Dev. & Min Max Rank & Max Max Rank \\
\midrule
Best-of-\(N\) (\(N=50\)) & 3.33 & 0.72 & 2.00 & 4.00 \\
Best-of-\(N\) (\(N=10\)) & 4.33 & 0.72 & 3.00 & 5.00 \\
Beam Search (width \(=4\)) & 5.07 & 0.80 & 4.00 & 6.00 \\
Best-of-\(N\) (\(N=5\)) & 5.33 & 0.72 & 4.00 & 6.00 \\
Habermas (candidates \(=5\)) & 5.40 & 0.74 & 4.00 & 6.00 \\
Finite Lookahead (depth \(=3\)) & 5.80 & 0.41 & 5.00 & 6.00 \\
\bottomrule
\end{tabular}
\end{table}

\begin{table}[htbp]
\centering
\footnotesize
\caption{Overall comparison: Gemma \(EPPL\) (lower is better).}
\label{tab:method_comp_gemma_eppl_q3}
\begin{tabular}{@{}lrrrr@{}}
\toprule
Method & Mean & Std.\ Dev. & Min & Max \\
\midrule
Best-of-\(N\) (\(N=50\)) & 7.43 & 2.91 & 3.92 & 10.96 \\
Beam Search (width \(=4\)) & 7.84 & 1.33 & 6.34 & 9.89 \\
Finite Lookahead (depth \(=3\)) & 8.01 & 2.23 & 5.60 & 10.97 \\
Best-of-\(N\) (\(N=10\)) & 9.03 & 4.03 & 3.33 & 14.52 \\
Best-of-\(N\) (\(N=5\)) & 9.75 & 5.29 & 4.72 & 18.42 \\
Habermas (candidates \(=5\)) & 17.11 & 7.06 & 9.44 & 32.49 \\
\bottomrule
\end{tabular}
\end{table}

\begin{table}[htbp]
\centering
\footnotesize
\caption{Overall comparison: Llama \(EPPL\) (lower is better).}
\label{tab:method_comp_llama_eppl_q3}
\begin{tabular}{@{}lrrrr@{}}
\toprule
Method & Mean & Std.\ Dev. & Min & Max \\
\midrule
Best-of-\(N\) (\(N=50\)) & 7.58 & 2.58 & 3.85 & 10.79 \\
Best-of-\(N\) (\(N=5\)) & 7.67 & 2.43 & 4.76 & 10.95 \\
Best-of-\(N\) (\(N=10\)) & 7.67 & 2.43 & 3.69 & 10.17 \\
Finite Lookahead (depth \(=3\)) & 8.49 & 3.29 & 5.56 & 13.89 \\
Beam Search (width \(=4\)) & 10.16 & 1.99 & 7.96 & 13.06 \\
Habermas (candidates \(=5\)) & 12.48 & 4.80 & 6.48 & 23.30 \\
\bottomrule
\end{tabular}
\end{table}

\paragraph{Findings for RQ3.}
Best-of-\(N\) with large \(N\) (\(N=50\)) is strongest on both \(EPPL\) metrics and on worst-case LLM-judge rank. Beam Search with width \(4\) is competitive on Gemma \(EPPL\) (second to Best-of-\(N\) with \(N=50\)) and outperforms Best-of-\(N\) with \(N\in\{5,10\}\) on that metric. Finite Lookahead performs moderately well. Habermas (candidates \(=5\)) lags on both \(EPPL\) and the judge metric. Method choice trades off average \(EPPL\), worst-case judge rank, and compute budget. Token-level MDP methods (Beam Search and Finite Lookahead) also admit analysis within the MDP framework, which is useful for theory.

\subsubsection{Bowling~Green\slash Warren~County (41 agents; Llama~3.1 8B Instruct)}
\label{app:bgwc_llama41}
We evaluated the Bowling~Green\slash Warren~County data from \citet{boehmer2025generative}. Using \(41\) agents and one seed with Llama~3.1 8B Instruct, \autoref{tab:bgwc_llama41} reports Egalitarian Perplexity (\(EPPL\) calculated using Llama~3.1 8B Instruct; lower is better) and the full consensus statement for each method.

\begin{table}[htbp]
\centering
\footnotesize
\caption{Bowling~Green\slash Warren~County results (41 agents, one seed). Model: Llama~3.1 8B Instruct. Lower \(EPPL\) is better.}
\label{tab:bgwc_llama41}
\begin{tabularx}{\textwidth}{@{}l r X@{}}
\toprule
Method & \(EPPL\downarrow\) & Full consensus statement \\
\midrule
Beam Search (width \(=4\)) & 3.29 & To make Bowling~Green/Warren~County a better place to live, work, and spend time, we need to address our community's most pressing issues, including a lack of accessible and affordable housing options, inadequate public transportation, and insufficient mental-health resources. Furthermore, we need to focus on improving our community's infrastructure, such as upgrading our roads and traffic-management systems. Additionally, we must ensure our community is inclusive and welcoming to all residents, regardless of their background, culture, or socioeconomic status. \\
Best-of-4 & 6.17 & To make Bowling~Green/Warren~County a better place to live, work, and spend time, we recommend prioritizing initiatives that expand high-speed internet access, improve traffic flow and safety, enhance public transportation, and enrich community life, while also prioritizing education, public health, and economic equality. \\
Prompted Habermas Machine (4 candidates \(+\) 1 critique) & 14.08 & To make Bowling~Green/Warren~County a better place to live, work, and spend time, we need to address pressing issues such as increasing high-speed internet access, improving traffic flow and safety by installing traffic cameras, increasing arts-education opportunities, and protecting tenant and renters' rights. We also need to ensure welcoming and inclusive initiatives to support refugees and immigrants in broader community development. Invest in public transportation and enforce safety regulations such as parking policies to foster a sustainable environment. By working together to implement these actionable plans and initiatives, we can build a more comprehensive strategy to address the community's high-priority needs. \\
\bottomrule
\end{tabularx}
\end{table}

\subsection{Supplemental Table for \autoref{fig:per_scenario_perplexity}}
\label{app:supp_table}

\begin{table}[H]
\centering
\footnotesize
\caption{Egalitarian Welfare (perplexity) across all scenarios shown in \autoref{fig:per_scenario_perplexity}.}
\label{tab:eppl_results}
\begin{tabular}{l|ccc|c}
\toprule
\textbf{Method} & \textbf{Scenario 1} & \textbf{Scenario 2} & \textbf{Scenario 3} & \textbf{Overall} \\
\midrule
Beam Search & \textbf{3.22 ± 0.37} & \textbf{2.66 ± 0.15} & \textbf{2.74 ± 0.25} & \textbf{2.87 ± 0.37} \\
Finite Lookahead & 4.29 ± 0.66 & 3.83 ± 0.20 & 4.43 ± 0.67 & 4.18 ± 0.61 \\
Best-of-N & 8.57 ± 1.15 & 6.70 ± 0.32 & 3.77 ± 0.75 & 6.35 ± 2.14 \\
Habermas Machine & 10.58 ± 4.76 & 10.39 ± 2.14 & 8.03 ± 1.94 & 9.67 ± 3.42 \\
Predefined & 17.18 ± 0.00 & 15.71 ± 0.00 & 14.18 ± 0.19 & 15.69 ± 1.23 \\
\bottomrule
\end{tabular}
\end{table}

\clearpage
\subsection{Scenarios and Sample Consensus Statements from \autoref{sec:consensus_generation_exp}}
\label{app:sample_statements}

{%
  \setlength{\textfloatsep}{6pt}   % tighten float–text space
  \setlength{\abovecaptionskip}{3pt}
  \renewcommand{\topfraction}{0.99}
  \renewcommand{\textfraction}{0.01}
\begin{table*}[!b]
    \centering
    \scriptsize
    \caption{Opinions and Consensus Statements on Scenario 1: "\textit{Are we living in a golden age of democracy?}"}
    \label{tab:democracy_summary}
    \begin{tabular}{@{} l p{0.7\textwidth} @{}}
        \toprule
        \textbf{Source} & \textbf{Opinion / Consensus Statement} \\
        \midrule
        \multicolumn{2}{c}{\textit{\textbf{Individual Agent Opinions}}} \\
        \midrule
        Agent 1 & No, I think the golden age of democracy is long gone. I think a system where the first past the post wins is not working and we need to move to a model of proportional representation which would give more people the feeling that their voices were being heard. On the subject of voices, I'm strongly of the opinion that we have become a society where the loud voices of minority groups are able to impose their views on the rest of the population which to me is no democracy at all. \\
        \midrule
        Agent 2 & Worldwide democracy is more present than it's ever been in history. So yes, compared to previous ages in history I believe we are. Although that's not to say we can't improve - many countries still don't operate democratically, and in the ones that do, corruption is rife. \\
        \midrule
        Agent 3 & Yes, we are living in a golden age of democracy as democracy is of the people. \\
        \midrule
        Agent 4 & Not at all. The notion of democracy is being used for personal gains of those in government, and the system is manipulated. Around the world there is a considerable amount of oppression and lack of democractic values. \\
        \midrule
        Agent 5 & Compared to some parts of the world such as Russia and China which are actively regressing and reverting back to archaic ways of controlling their people, most Western countries are living through comparatively decent times, although problems still exist. \\
        \midrule
        \multicolumn{2}{c}{\textit{\textbf{Consensus Statements from Methods}}} \\
        \midrule
        Best of N & While acknowledging the imperfect nature of democracies worldwide, and recognising instances of regression and manipulation, we contend that, compared to historical periods and several non-democratic countries, the majority of Western countries and a sizeable portion of the global population currently enjoys a relatively high level of democratic freedoms; however, the overarching consensus leans towards a need for more inclusive and accountable forms of governance. \\
        \midrule
        Finite Lookahead & Given the varying perspectives, we acknowledge that the definition of democracy can be subjective and dependent on geographical and cultural contexts, yet still, we recognize the existence of significant flaws in the current democratic systems worldwide, including instances of corruption, oppression, and manipulation, leading to the erosion of trust in governments and the democratic process. While some countries have made significant progress in consolidating democratic institutions, others have regressed, highlighting the need for continuous improvement and reform to ensure the preservation and strengthening of democratic values and principles. Therefore, we conclude that the current state of democracy worldwide is complex and multifaceted, with areas of improvement necessary to fully realize the ideals of democracy for all citizens, and acknowledge the presence of democratic systems in many parts of the world, particularly in the Western hemisphere, which, despite their imperfections, remain the best available option for promoting the common good and protecting individual rights and freedoms. \\
        \midrule
        Prompted Habermas\\Machine & We acknowledge that the concept of a golden age of democracy is complex and contentious. While democracy has made significant progress in many parts of the world, we also recognize that fundamental challenges persist, including issues with representation, corruption, and the suppression of marginalized voices. These challenges highlight the need for a more critical examination of our democratic systems and a commitment to ongoing reform and improvement. \\
        \midrule
        Habermas Machine & Although in the past we've had high hopes of a better world after the horrors of WWII and the fall of the Iron Curtain, democracy is in danger worldwide and may never reach its full potential. The Western world has poor democratic values, and even though democracy is spreading worldwide it is being overshadowed by the loud voices of minority groups. \\
        \midrule
        Beam Search & We are not living in a golden age of democracy, but rather in an age where democracy is being challenged from within and from outside, and it is up to all of us to ensure that the principles of democracy are upheld. \\
        \bottomrule
    \end{tabular}
\end{table*}
}

\begin{table*}[htbp]
    \centering
    \scriptsize
    \caption{Agent Opinions and Method Consensus on Scenario 2: "\textit{Should the UK's ties to Europe be stronger or weaker?}"}
    \label{tab:UK_summary}
    \begin{tabular}{@{} l p{0.7\textwidth} @{}}
        \toprule
        \textbf{Source} & \textbf{Opinion / Consensus Statement} \\
        \midrule
        \multicolumn{2}{c}{\textit{\textbf{Individual Agent Opinions}}} \\
        \midrule
        Agent 1 & When we was in Europe we had good trade with them , The decision to leave was very bad for united kindom . We need to put the vote again to the British public i am sure this time the decision would be to remain \\
        \midrule
        Agent 2 & The natural evolution of our species has been to grow into ever bigger "tribes". Families ruled by their patriarchs became tribes ruled by elders became countries ruled by governments. It made sense that countries would evolve separately since they were geographically separate with no means of communication. Now our world is so connected, it is inevitable that we evolve into ever larger units such as the United States and the European Union. Eventually we will become a multi-planetary species ruled by an Earth government. To sever ties with Europe is a step in the wrong direction. \\
        \midrule
        Agent 3 & Although we did exit EU few years ago, we are geographically and economically in proximity to most EU countries. Several geographic, financial, political and economical parameters are intertwined with our neighbouring countries and, it would be advantageous to be in good relations to fully harness our economic, political, and financial facilities. \\
        \midrule
        Agent 4 & I believe the UK's ties to Europe should be stronger, as it would make trade deals easier to negotiate. This would allow us to benefit from a larger array of goods, which would make our imports cheaper. I also believe that closer ties with Europe in terms of immigration policies should occur as our immigration policy doesn't coincide with other nations around us. I believe a more united Europe would help all countries grow more through the movement of free labour, goods and services and more. \\
        \midrule
        Agent 5 & I think they should be stronger, as all the countries of Europe, except us, are part of the EU, and it makes Britain more isolated. Countries are stronger I believe when they work together with other countries and are open to the influence of other countries. I believe that leaving the EU was a step backwards and done for the wrong reasons. We are not the country we were 100 or even 50 years ago and the world is changing so much that we cannot afford to not be part of Europe. \\
        \midrule
        \multicolumn{2}{c}{\textit{\textbf{Consensus Statements from Methods}}} \\
        \midrule
        Best of N & Given the participants' opinions, we conclude that in order to achieve economic prosperity and streamline international relations, the UK's ties to Europe should be strengthened through collaborative efforts in trade, immigration, and economic policies while embracing the benefits of European integration, mutual understanding, and cooperation. \\
        \midrule
        Finite Lookahead & The UK's ties to Europe should be strengthened to promote economic growth, facilitate trade, and foster greater unity among European nations. This can be achieved through mutually beneficial agreements and policies that reflect the shared interests of the UK and its European neighbours, leading to a more harmonious and prosperous region for all.  Let's work together to build a stronger, more integrated Europe for the benefit of all its citizens, including the British people.  Let's put the past behind us and look forward to a brighter, more collaborative future for the UK and Europe.  Let's recognize the benefits of unity and work towards creating a more cohesive and successful Europe for generations to come. \\
        \midrule
        Prompted Habermas\\Machine & We propose strengthening ties with Europe to foster greater economic cooperation, promote a more unified Europe, and address global challenges. However, we acknowledge that the path to this goal should include a reevaluation of the decision to leave the EU, allowing for a more informed and inclusive discussion about our relationship with Europe. Moreover, we recognize the long-term implications of European integration as part of a broader process of global unification, which may ultimately lead to a more interconnected and cooperative world. \\
        \midrule
        Habermas Machine & The UK's ties to Europe should be stronger. This is because, although the UK did leave the EU, we are geographically and economically in proximity to most EU countries. Several geographic, financial, political and economical parameters are intertwined with our neighbouring countries and, it would be advantageous to be in good relations to fully harness our economic, political, and financial facilities. \\
        \midrule
        Beam Search & The UK's ties to Europe should be stronger, and it would be beneficial for the country to be a part of the European Union or a similar entity that fosters cooperation and unity among its member states. This would allow for more efficient trade and economic growth, as well as greater collaboration on issues such as immigration, security, and environmental policy. \\
        \bottomrule
    \end{tabular}
\end{table*}

\begin{table*}[htbp]
    \centering
    \scriptsize
    \caption{Agent Opinions and Method Consensus on Scenario 3: "\textit{Is it okay to use animals for food?}"}
    \label{tab:vegetarian_summary}
    \begin{tabular}{@{} l p{0.7\textwidth} @{}}
        \toprule
        \textbf{Source} & \textbf{Opinion / Consensus Statement} \\
        \midrule
        \multicolumn{2}{c}{\textit{\textbf{Individual Agent Opinions}}} \\
        \midrule
        Agent 1 & Yes, as human beings are natural meat eaters. However, it is morally wrong to keep animals in cruel conditions and use them for food. On the whole, farming is an unethical practise and many farming methods should be banned.  \\
        \midrule
        Agent 2 & I think that it is ok to use animals for food because I enjoy eating meat. I don't think it is ok if the animal is endangered and I do think that there should be certain factors in place to ensure that they are well kept.  \\
        \midrule
        Agent 3 & In my opinion it is ok to use animals for food. They provide nutrition and vitamins to use and I think it is just the basic food chain. I do understand why people choose to be vegatarian or vegan, but I do not agree with them trying to force their views upon others. \\
        \midrule
        Agent 4 & Humans have used animals for food since we first evolved as a species. Animals provide a great deal of nutrition that the human body needs to be healthy. It is certainly true that we now have more alternative foods available so that it can be possible to live healthily without consuming meat - though this does usually rely on taking nutritional supplements. I think that we should continue to diversify food production to provide people with meat alternatives, but also we should continue to improve the welfare of animals that are eaten for food and try to do this sustainably to minimise the impact this has on the animals wellbeing and on the land / environment too. \\
        \midrule
        \multicolumn{2}{c}{\textit{\textbf{Consensus Statements from Methods}}} \\
        \midrule
        Best of N & While considering the ethical and nutritional aspects of using animals for food, we agree that it is acceptable to consume meat in moderation, provided that animals are raised and slaughtered humanely, and that sustainable and environmentally-friendly farming methods are implemented to minimize harm. \\
        \midrule
        Finite Lookahead & The use of animals for food is acceptable, however it should be done in a way that respects the welfare of the animals and considers the impact on the environment, with the aim of achieving sustainable food production methods that benefit both human and animal well-being, and also providing alternatives for those who choose not to consume meat.  This includes improving farming practices, reducing the amount of waste and pollution, and promoting education on responsible food production and consumption practices.  All sides of the issue should be taken into account, and the goal should be to achieve a balance between food security, animal welfare, and environmental sustainability. \\
        \midrule
        Prompted Habermas\\Machine & We acknowledge that humans have traditionally used animals for food, and we recognize the importance of improving animal welfare and minimizing environmental impact to ensure sustainable practices. We understand that some consumers prefer eating meat, and we encourage diversifying food production options to include alternatives that can meet the nutritional needs of humans. Importantly, we emphasize the need to prohibit inhumane farming practices and actively work towards improving animal conditions, which aligns with our collective commitment to animal well-being and environmental stewardship. \\
        \midrule
        Habermas Machine & Humans have used animals for food since we first evolved as a species. Animals provide a great deal of nutrition that the human body needs to be healthy. It is certainly true that we now have more alternative foods available so that it can be possible to live healthily without consuming meat - though this does usually rely on taking nutritional supplements. We should continue to diversify food production to provide people with meat alternatives, but also we should continue to improve the welfare of animals that are eaten for food and try to do this sustainably to minimise the impact this has on the animals wellbeing and on the land / environment too. \\
        \midrule
        Beam Search & It is generally acceptable to use animals for food, as long as they are treated with respect and care, but we must also consider the impact of our food choices on the environment and animal welfare. \\
        \bottomrule
    \end{tabular}
\end{table*}

\clearpage

\section{Prompts}
\label{app:prompts}

\begin{figure}[H]
\begin{lstlisting}
    
You are generating a consensus statement that represents the views of multiple participants.

Your task is to continue the statement in a way that addresses the issue and considers all participants' opinions. Be concise and keep the statement short (less than 50 tokens) and focused. ONLY WRITE THE STATEMENT AND NOTHING ELSE.

Issue:
<issue>

Participants' opinions:
<opinion_1>
.
.
.
<opinion_n>

Consensus statement:
\end{lstlisting}
\caption{Reference policy prompt.}
\label{fig:ref_policy_prompt}
\end{figure}

\begin{figure}[H]
    \begin{lstlisting}
You are generating a statement that represents the views of a single participant.

Your task is to continue the statement in a way that addresses the issue and considers ONLY this participant's opinion. Be concise and keep the statement short (less than 50 tokens) and focused. ONLY WRITE THE STATEMENT AND NOTHING ELSE.

Issue:
<Issue>

Participant's opinion:
<opinion>

Statement reflecting ONLY this participant's opinion:
    \end{lstlisting}
    \caption{Agent policy prompt.}
    \label{fig:agent_policy_prompt}
\end{figure}

\begin{figure}[H]
    \begin{lstlisting}

You are helping to fix ONLY the ending of a generated statement.

VERY IMPORTANT: If the statement ending is already complete and well-formed, DO NOT modify it at all.

Your task is to:
1. DO NOT change any part of the statement except the last few sentences if they have issues
2. Look for and fix ONLY these issues at the end of the statement:
   - Remove repetition in the final sentences
   - Complete any unfinished final sentence that can be completed easily
   - Remove any incomplete final sentence that cannot be meaningfully finished
3. Keep the changes minimal and focused only on the ending
4. DO NOT add any new information or opinions
5. DO NOT modify anything except problematic sentences at the end
6. If the statement is already complete and well-formed, return it EXACTLY as provided

Here is the statement:

<statement>

    \end{lstlisting}
    \caption{Brush up prompt.}
    \label{fig:brush_up_prompt}
\end{figure}

\begin{figure}[H]
\begin{lstlisting}
You are evaluating consensus statements from the perspective of a specific agent. Your task is to rank multiple statements based on how well they represent the agent's opinion and interests on a given issue. Use ONLY the agent's stated opinion to determine the ranking.

Issue:
<issue>

Agent <agent_id>'s Opinion:
<agent_opinion>

Statements to Rank
Statement 1:
<statement_1_text>

Statement 2:
<statement_2_text>
...
Statement <n_statements>:
<statement_n_text>

Task:
From Agent <agent_id>'s perspective, rank all statements from most favorable (1)
to least favorable (<n_statements>) based on how well they represent the agent's
opinion and interests.

Provide your ranking as a JSON object with:
1. 'reasoning': brief explanation for your ranking decisions
2. 'ranking': an array of statement numbers in ranked order (best to worst)

For example: {'reasoning': 'Statement 3 best represents the agent's concerns about...',
'ranking': [3, 1, 2]}
\end{lstlisting}
\caption{Prompt used for LLM judge in \autoref{app:additional_consensus_experiments}.}
\label{fig:llm_judge_prompt}
\end{figure}

\end{document}